\documentclass[sigconf, dvipsnames]{acmart}
\settopmatter{authorsperrow=4}


\usepackage{amsmath,amsfonts,bm}









\def\eqref#1{equation~\ref{#1}}









\def\1{\bm{1}}








\def\vh{{\bm{h}}}



\DeclareMathAlphabet{\mathsfit}{\encodingdefault}{\sfdefault}{m}{sl}
\SetMathAlphabet{\mathsfit}{bold}{\encodingdefault}{\sfdefault}{bx}{n}


\def\gG{{\mathcal{G}}}
\def\gGc{{\mathcal{G}_c}}










\newcommand{\E}{\mathbb{E}}
\newcommand{\Ls}{\mathcal{L}}
\newcommand{\R}{\mathbb{R}}



\def\V{{\mathcal{V}}}
\def\E{{\mathcal{E}}}

\def\A{{\mathcal{A}}}
\def\R{{\mathcal{R}}}
\def\P{{\mathcal{P}}}
\def\M{{\mathcal{M}}}
\usepackage[T1]{fontenc}    
\usepackage[utf8]{inputenc}
\usepackage{algorithm}
\usepackage{algpseudocode}
\usepackage{amsfonts}       
\usepackage{amsthm}
\usepackage{cases}
\usepackage{color}
\usepackage{enumitem}
\usepackage{fancybox}
\usepackage{graphicx}
\usepackage{hyperref}       
\usepackage{listings}
\usepackage{mathtools}
\usepackage{microtype}
\usepackage{multirow}
\usepackage{url}            
\usepackage{xspace}
\usepackage{subcaption}

\theoremstyle{plain}
\newtheorem{theorem}{Theorem}[section]
\newtheorem{proposition}[theorem]{Proposition}
\newtheorem{lemma}[theorem]{Lemma}

\theoremstyle{definition}
\newtheorem{definition}[theorem]{Definition}

\newtheorem{problem}[theorem]{Problem}

\theoremstyle{remark}



\newcommand{\aminer}{\texttt{AugCitation}\xspace}
\newcommand{\synthetic}{\texttt{UserItemAttr}\xspace}
\newcommand{\method}{\textmd{PaGE-Link}\xspace}
\newcommand{\interpretability}{Connection Interpretability\xspace}
\newcommand{\scalability}{Scalability\xspace}
\newcommand{\heterogeneity}{Heterogeneity\xspace}

\AtBeginDocument{%
  \providecommand\BibTeX{{%
    \normalfont B\kern-0.5em{\scshape i\kern-0.25em b}\kern-0.8em\TeX}}}

\copyrightyear{2023} 
\acmYear{2023} 
\setcopyright{rightsretained} 
\acmConference[WWW '23]{Proceedings of the ACM Web Conference 2023}{May 1--5, 2023}{Austin, TX, USA}
\acmBooktitle{Proceedings of the ACM Web Conference 2023 (WWW '23), May 1--5, 2023, Austin, TX, USA}
\acmDOI{10.1145/3543507.3583511}
\acmISBN{978-1-4503-9416-1/23/04}

\begin{document}

\title{PaGE-Link: Path-based Graph Neural Network Explanation for Heterogeneous Link Prediction}

\author{Shichang Zhang}
\authornote{Work done while being an intern at Amazon Web Services. Code available at: \url{https://github.com/amazon-science/page-link-path-based-gnn-explanation}}
\affiliation{
  \institution{University of California, Los Angeles}
  \country{}
}
\email{shichang@cs.ucla.edu}

\author{Jiani Zhang}
\affiliation{
  \institution{Amazon}
  \country{}
}
\email{zhajiani@amazon.com}

\author{Xiang Song}
\affiliation{
  \institution{Amazon}
  \country{}
}
\email{xiangsx@amazon.com}

\author{Soji Adeshina}
\affiliation{
  \institution{Amazon}
  \country{}
}
\email{adesojia@amazon.com}

\author{Da Zheng}
\affiliation{
  \institution{Amazon}
  \country{}
}
\email{dzzhen@amazon.com}

\author{Christos Faloutsos}
\affiliation{
  \institution{Carnegie Mellon University}
  \institution{Amazon}
  \country{}
}
\email{christos@cs.cmu.edu}

\author{Yizhou Sun}
\affiliation{
  \institution{University of California, Los Angeles}
  \institution{Amazon}
  \country{}
}
\email{yzsun@cs.ucla.edu}

\renewcommand{\shortauthors}{Zhang, Shichang et al.}

\begin{abstract}
Transparency and accountability have become major concerns for black-box machine learning (ML) models. Proper explanations for the model behavior increase model transparency and help researchers develop more accountable models. Graph neural networks (GNN) have recently shown superior performance in many graph ML problems than traditional methods, and explaining them has attracted increased interest. However, GNN explanation for link prediction (LP) is lacking in the literature. LP is an essential GNN task and corresponds to web applications like recommendation and sponsored search on web. Given existing GNN explanation methods only address node/graph-level tasks, we propose \underline{Pa}th-based \underline{G}NN \underline{E}xplanation for heterogeneous \underline{Link} prediction (\textit{\method}) that generates explanations with \textit{connection interpretability}, enjoys model \textit{scalability}, and handles graph \textit{heterogeneity}. Qualitatively, \method can generate explanations as paths connecting a node pair, which naturally captures connections between the two nodes and easily transfer to human-interpretable explanations. Quantitatively, explanations generated by \method improve AUC for recommendation on citation and user-item graphs by \textit{9 - 35\%} and are chosen as better by \textit{78.79\%} of responses in human evaluation.

\end{abstract}

\begin{CCSXML}
<ccs2012>
   <concept>
       <concept_id>10010147.10010257.10010293.10010294</concept_id>
       <concept_desc>Computing methodologies~Neural networks</concept_desc>
       <concept_significance>500</concept_significance>
       </concept>
   <concept>
       <concept_id>10002950.10003624.10003633.10010917</concept_id>
       <concept_desc>Mathematics of computing~Graph algorithms</concept_desc>
       <concept_significance>500</concept_significance>
       </concept>
 </ccs2012>
\end{CCSXML}

\ccsdesc[500]{Computing methodologies~Neural networks}
\ccsdesc[500]{Mathematics of computing~Graph algorithms}

\keywords{Model Transparency, Model Explanation, Graph Neural Networks, Link Prediction}

\maketitle

\section{Introduction}\label{sec:introduction}
Transparency and accountability are significant concerns when researchers advance black-box machine learning (ML) models~\cite{fate, fate2}. Good explanations of model behavior improve model transparency. For end users, explanations make them trust the predictions and increase their engagement and satisfaction~\cite{herlocker2000explaining, bilgic2005explaining}. For researchers and developers, explanations enable them to understand the decision-making process and create accountable ML models. Graph Neural Networks (GNNs)~\cite{wu2020comprehensive,zhou2020graph} have recently achieved state-of-the-art performance on many graph ML tasks and attracted increased interest in studying their explainability~\cite{gnnexplainer, pgexplainer, zhang2022gstarx, yuan2022explainability}. However, to our knowledge, GNN explanation for link prediction (LP) is missing in the literature. LP is an essential task of many vital Web applications like recommendation~\cite{zhang2019star, mao2021ultragcn, wu2020graph} and sponsored search~\cite{li2021adsgnn, hao2021ks}. GNNs are widely used to solve LP problems~\cite{zhang2018link,zhu2021neural}, and generating good GNN explanations for LP will benefit these applications, e.g., increasing user satisfaction with recommended items.

\begin{figure}[t]
\centering
\includegraphics[width=\columnwidth]{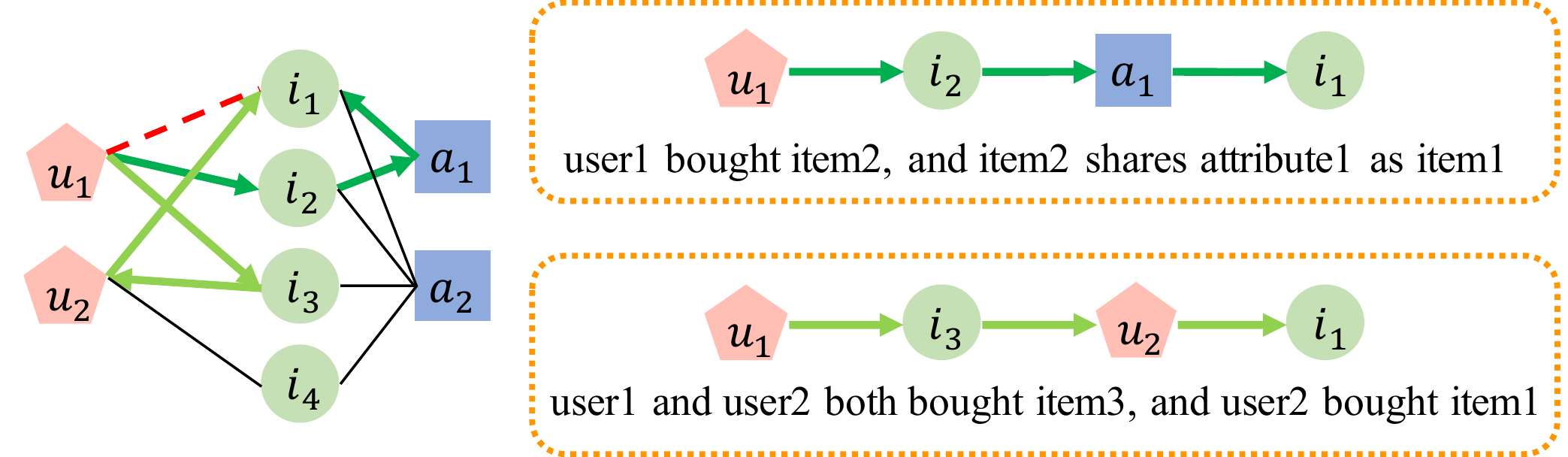}
\caption{Given a GNN model and a predicted link $(u_1, i_1)$ ({\color{red} dashed red}) on a heterogeneous graph of user $u$, item $i$, and attribute $a$ (left). \method generates two path explanations ({\color{ForestGreen}green arrows}). Interpretations illustrated on the right.}
\label{fig:crown_jewel}
\Description{PaGE-Link generates two path explanations on a heterogeneous graph.}
\end{figure}

Existing GNN explanation methods have addressed node/graph-level tasks on homogeneous graphs. Given a data instance, most methods generate an explanation by learning a mask to select an edge-induced subgraph~\cite{gnnexplainer, pgexplainer} or searching over the space of subgraphs~\cite{subgraphx}. However, explaining GNNs for LP is a new and more challenging task. Existing node/graph-level explanation methods do not generalize well to LP for three challenges. 1) \textit{\interpretability}: LP involves a pair of the source node and the target node rather than a single node or graph. Desired interpretable explanations for a predicted link should reveal connections between the node pair. Existing methods generate subgraphs with no format constraints, so they are likely to output subgraphs disconnected from the source, the target, or both. Without revealing connections between the source and the target, these subgraph explanations are hard for humans to interpret and investigate. 2) \textit{\scalability}: For LP, the number of edges involved in GNN computation almost grows from $m$ to ${\sim}2m$ compared to the node prediction task because neighbors of both the source and the target are involved. Since most existing methods consider all (edge-induced) subgraphs, the increased edges will scale the number of subgraph candidates by a factor of $O(2^m)$, which makes finding the optimal subgraph explanation much harder. 3) \textit{\heterogeneity}: Practical LP is often on heterogeneous graphs with rich node and edge types, e.g., a graph for recommendations can have user->buys->item edges and item->has->attribute edges, but existing 
methods only work for homogeneous graphs.

In light of the importance and challenges of GNN explanation for LP, we formulate it as a post hoc and instance-level explanation problem and generate explanations for it in the form of important paths connecting the source node and the target node. Paths have played substantial roles in graph ML and are the core of many non-GNN LP methods~\cite{liben2007link, katz, jeh2002simrank, sun2011pathsim}. Paths as explanations can solve the connection interpretability and scalability challenges. Firstly, paths connecting two nodes naturally explain connections between them. Figure~\ref{fig:crown_jewel} shows an example on a graph for recommendations. Given a GNN and a predicted link between user $u_1$ and item $i_1$, human-interpretable explanations may be based on the user's preference of attributes (e.g., user $u_1$ bought item $i2$ that shared the same attribute $a_1$ as item $i_1$) or collaborative filtering (e.g, user $u_1$ had a similar preference as user $u_2$ because they both bought item $i_3$ and user $u_2$ bought item $i_1$, so that user $u_1$ would like item $i_1$). Both explanations boil down to paths. Secondly, paths have a considerably smaller search space than general subgraphs. As we will see in Proposition~\ref{prop:num_paths}, compared to the expected number of edge-induced subgraphs, the expected number of paths grows strictly slower and becomes negligible. Therefore, path explanations exclude many less-meaningful subgraph candidates, making the explanation generation much more straightforward and accurate.

To this end, we propose \underline{Pa}th-based \underline{G}NN \underline{E}xplanation for heterogeneous \underline{Link} prediction (\method), which achieves a better explanation AUC and scales linearly in the number of edges (see Figure~\ref{fig:crown_jewel2}). 
We first perform \textit{k-core pruning}~\cite{kcore-def} to help find paths and improve scalability. Then we do \textit{heterogeneous path-enforcing} mask learning to determine important paths, which handles heterogeneity and enforces the explanation edges to form paths connecting source to target. 
In summary, the contributions of our method are:

\begin{itemize}[leftmargin=10pt]
    \item {\bf \interpretability:} \method produces more interpretable explanations in path forms and quantitatively improves explanation AUC over baselines.
    \item {\bf \scalability}: \method reduces the explanation search space by magnitudes from subgraph finding to path finding and scales linearly in the number of graph edges.
    \item {\bf \heterogeneity:} \method works on heterogeneous graphs and leverages edge-type information to generate better explanations.
\end{itemize}

\begin{figure}[t]
\centering
\begin{subfigure}[b]{0.53\columnwidth}
\centering
\includegraphics[width=0.99\textwidth]{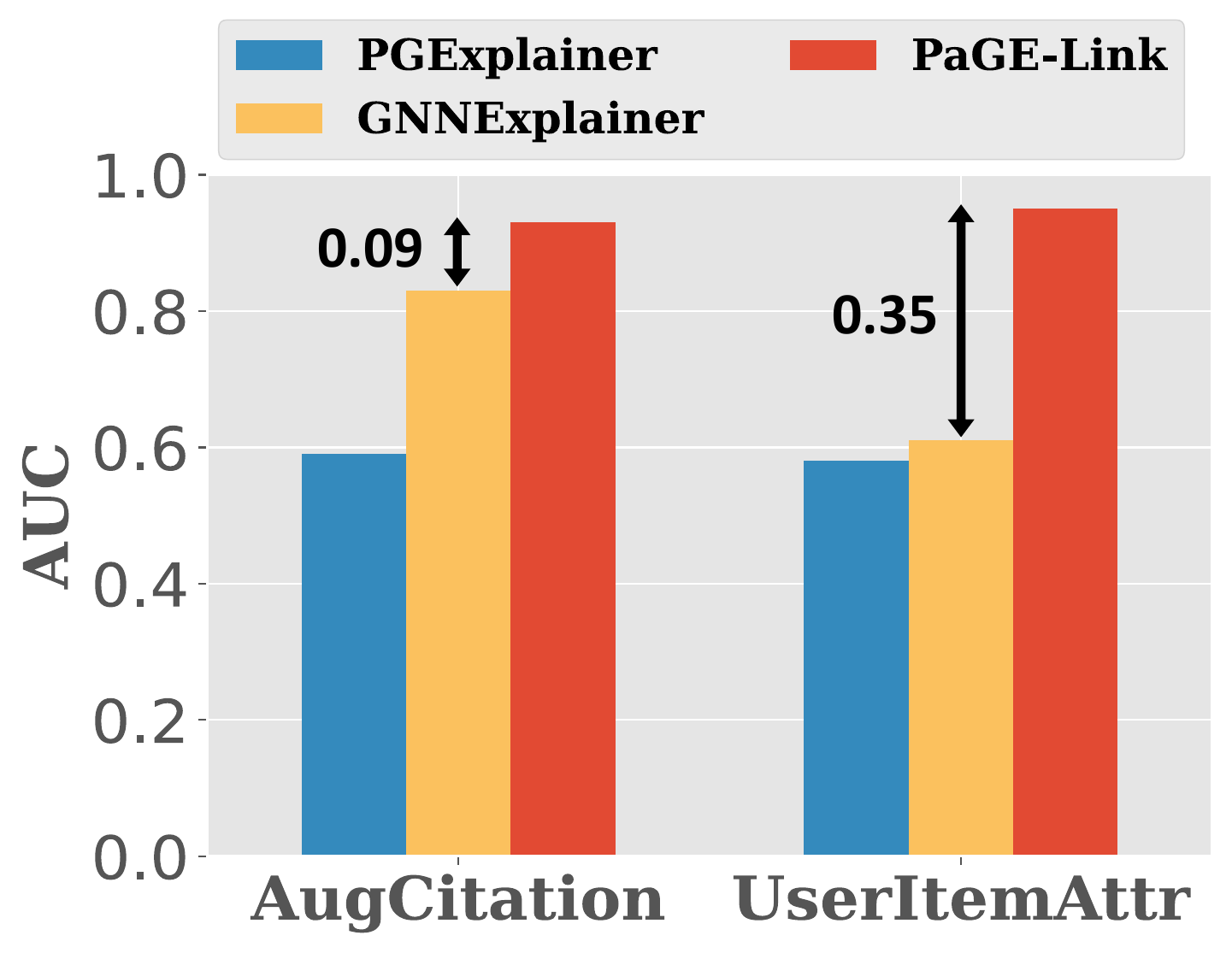}
\centering
\end{subfigure}
\hfill
\begin{subfigure}[b]{0.46\columnwidth}
\centering
\includegraphics[width=0.99\columnwidth]{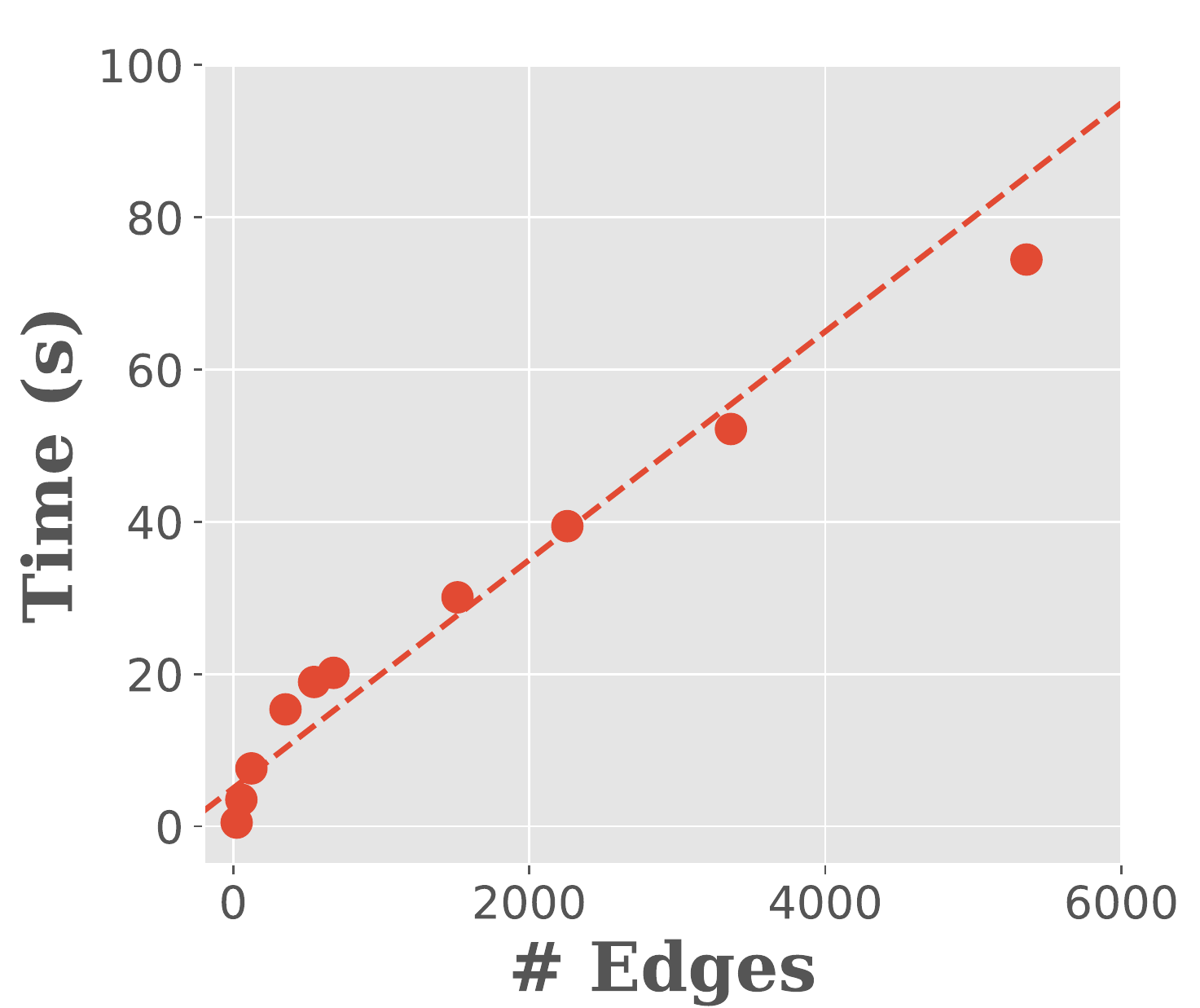}
\end{subfigure}
\caption{(a) \method outperforms GNNExplainer and PGExplainer in terms of explanation AUC on the citation graph and the user-item graph. (b) The running time of \method scales linearly in the number of graph edges.}
\label{fig:crown_jewel2}
\Description{PaGE-Link shows higher AUC scores than baselines, and it scales linearly in the number of edges in the graph.}
\end{figure}

\section{Related work}\label{sec:related}
We review relevant research on (a) GNNs (b) GNN explanation (c) recommendation explanation and (d) paths for LP. We summarize the properties of \method vs. representative methods in Table \ref{tab:salesman}.

\paragraph{GNNs}
GNNs are a family of ML models on graphs~\cite{gcn, gat, gin}. They take graph structure and node/edge features as input and output node representations by transforming and aggregating features of nodes’ (multi-hop) neighbors. The node representations can be used for LP and achieved great results on LP applications~\cite{zhang2020revisiting, zhang2018link, zhang2019star, mao2021ultragcn, wu2020graph, zhao2022learning, guo2022linkless}. We review GNN-based LP models in Section~\ref{sec:preliminary}.

\paragraph{GNN explanation} GNN explanation was studied for node and graph classification, where the explanation is defined as an important subgraph. Existing methods majorly differ in their definition of importance and subgraph selection methods. GNNExplainer~\cite{gnnexplainer} selects edge-induced subgraphs by learning fully parameterized masks on graph edges and node features, where the mutual information (MI) between the masked graph and the prediction made with the original graph is maximized. PGExplainer~\cite{pgexplainer} adopts the same MI importance but trains a mask predictor to generate a discrete mask instead. Other popular importance measures are game theory values. SubgraphX~\cite{subgraphx} uses the Shapley value~\cite{shapley} and performs Monte Carlo Tree Search (MCTS) on subgraphs. GStarX~\cite{zhang2022gstarx} uses a structure-aware HN value~\cite{hn_value} to measure the importance of nodes and generates the important-node-induced subgraph. There are more studies from other perspectives that are less related to this work, i.e., surrogate models~\cite{graphlime, pgm_explainer}, counterfactual explanations~\cite{lucic2022cf}, and causality~\cite{gem, orphicx}, for which \cite{taxonomy} provides a good review. While these methods produce subgraphs as explanations, what makes a good explanation is a complex topic, especially how to meet ``stakeholders’ desiderata''~\cite{langer2021we}. Our work differs from all above since we focus on a new task of explaining heterogeneous LP, and we generate paths instead of unrestricted subgraphs as explanations. The interpretability of paths makes our method advantaged especially when stakeholders have less ML background. 

\paragraph{Recommendation explanation} This line of works explains why a recommendation is made \cite{rec_explain_survey}. J-RECS~\cite{jrecs} generates recommendation explanations on product graphs using a justification score that balances item relevance and diversity. PRINCE~\cite{prince_2020} produces end-user explanations as a set of minimal actions performed by the user on graphs with users, items, reviews,
and categories. The set of actions is selected using counterfactual evidence.
Typically, recommendations on graphs can be formalized as an LP task. However, the recommendation explanation problem differs from explaining GNNs for LP because the recommendation data may not be graphs, and the models to be explained are primarily not GNN-based~\cite{wang2019explainable}. GNNs have their unique message passing procedure, and GNN-based LP corresponds to more general applications beyond recommendation, e.g., drug repurposing~\cite{ioannidis2020few}, and knowledge graph completion~\cite{nickel2015review, cheng-etal-2021-uniker}.
Thus, recommendation explanation is related to but not directly comparable to GNN explanation.

\paragraph{Paths} Paths are important in graph ML, and many LP methods are path-based, such as graph distance~\cite{liben2007link}, Katz index~\cite{katz}, SimRank~\cite{jeh2002simrank}, and PathSim~\cite{sun2011pathsim}. Paths have also been used to capture the relationship between a pair of nodes. For example, the ``connection subgraphs''~\cite{faloutsos2004fast} find paths between the source and the target based on electricity analogs. In general, although black-box GNNs recently outperform path-based methods in LP accuracy, we embrace paths for their interpretability for LP explanation.

\def\boldqm{\textbf{?}}
\def\rot{\rotatebox{72}}
\begin{table}[t]
\center
\footnotesize
\caption{Methods and desired explanation properties.
A question mark (\boldqm{}) means ``unclear'', or ``maybe, after
non-trivial extensions''. "Rec. Exp." stands for the general recommendation explanation methods.}
\begin{tabular}{@{}lcccccc|c@{}}
\toprule
\multicolumn{1}{l}{Methods} & \rot{GNNExp~\scriptsize{\cite{gnnexplainer}}} & \rot{PGExp~\scriptsize{\cite{pgexplainer}}} & \rot{SubgraphX~\scriptsize{\cite{subgraphx}}} & \rot{J-RECS~\scriptsize{\cite{jrecs}}} & \rot{PRINCE~\scriptsize{\cite{prince_2020}}} &  \rot{Rec. Exp.~\scriptsize{\cite{rec_explain_survey}}} & \rot{\method} \\ \midrule
On Graphs & \checkmark & \checkmark & \checkmark  & \checkmark & \checkmark & \boldqm & \checkmark \\
Explains GNN & \checkmark& \checkmark & \checkmark & & & & \checkmark \\
Explains LP & \boldqm & \boldqm & \boldqm & \checkmark & \checkmark &  \checkmark & \checkmark \\
Connection &  & & & \boldqm & \boldqm & \boldqm & \checkmark \\ 
Scalability & \checkmark & \checkmark & & \checkmark & \boldqm & \boldqm & \checkmark \\
Heterogeneity & & & \checkmark & \checkmark & \checkmark & \boldqm & \checkmark \\
\bottomrule
\end{tabular}
\label{tab:salesman}
\Description{PaGE-Link wins on desired properties of good explanations.}
\end{table}

\section{Notations and preliminary}\label{sec:preliminary}
In this section, we define necessary notations, summarize them in Table \ref{tab:notation}, and review the GNN-based LP models.

\begin{definition} \label{def:hetero}
A heterogeneous graph is defined as a directed graph $\gG = (\V, \E)$ associated with a node type
mapping function $\phi: \V \rightarrow \A$ and an edge type mapping function
$\tau: \E \rightarrow \R$. Each node $v \in \V$ belongs to one node type $\phi(v) \in \A$ and each edge $e \in \E$ belongs to one edge type $\tau(e) \in \R$.    
\end{definition}

Let $\Phi(\cdot, \cdot)$ denote a trained GNN-based model for predicting the missing links in $\gG$, where a prediction $Y = \Phi(\gG, (s, t))$ denotes the predicted link between a source node $s$ and a target node $t$. The model $\Phi$ learns a conditional distribution $P_{\Phi}(Y | \gG, (s, t))$ of the binary random variable $Y$. The commonly used GNN-based LP models~\cite{zhang2018link, zhu2021neural, zhao2022learning} involve two steps. The first step is to generate node representations $(\vh_s, \vh_t)$ of $(s, t)$ with an $L$-hop GNN encoder. The second step is to apply a prediction head on $(\vh_s, \vh_t)$ to get the prediction of $Y$. An example prediction head is an inner product.

To explain $\Phi(\gG, (s, t))$ with an $L$-Layer GNN encoder, we restrict to the \textit{computation graph} $\gG_c = (\V_c, \E_c)$. $\gG_c$ is the $L$-hop ego-graph of the predicted pair $(s, t)$, i.e., the subgraph with node set $\V_c = \{v \in V | dist(v, s) \leq L \text{ or } dist(v, t) \leq L\}$. It is called a computation graph because the $L$-layer GNN only collects messages from the $L$-hop neighbors of $s$ and $t$ to compute $\vh_s$ and $\vh_t$. The LP result is thus fully determined by $\gG_c$, i.e., $\Phi(\gG, (s, t)) \equiv \Phi(\gG_c, (s, t))$. Figure~\ref{fig:p-b} shows a 2-hop ego-graph of $u_1$ and $i_1$, where $u_3$ and $a^1_3$ are excluded since they are more than 2 hops away from either $u_1$ or $i_1$.

\begin{table}[t]
\center
\small
\caption{Notation table}
\begin{tabular}{@{}l|l@{}}
\toprule
\multicolumn{1}{l}{Notation} & Definition and description \\ \midrule
$\gG = (\V, \E)$ & a heterogeneous graph $\gG$, node set $\V$, and edge set $\E$ \\
$\phi: \V \rightarrow \A$ &  a node type mapping function \\
$\tau: \E \rightarrow \R$ &  an edge type mapping function \\
$D_v$ & the degree of node $v \in \V$ \\
$\E^r $ & edges with type $r \in \R$, i.e., $\E^r = \{e \in \E | \tau(e) = r\}$ \\
$\Phi(\cdot, \cdot)$ & the GNN-based LP model to explain \\
$(s, t)$ & the source and target node for the predicted link \\
$\vh_s$ \& $\vh_t$ & the node representations for $s$ \& $t$ \\
$Y = \Phi(\gG, (s, t))$ & the link prediction of the node pair $(s, t)$ \\
$\gG_c = (\V_c, \E_c)$ & the computation graph, i.e., L-hop ego-graph of $(s, t)$ \\
\bottomrule
\end{tabular}
\label{tab:notation}
\Description{Notations.}
\end{table}

\section{Proposed problem formulation: link-prediction explanation }\label{sec:problem}
\begin{figure*}[t]
\centering
\begin{subfigure}[t]{0.22\textwidth}
\centering
\includegraphics[width=0.9\textwidth]{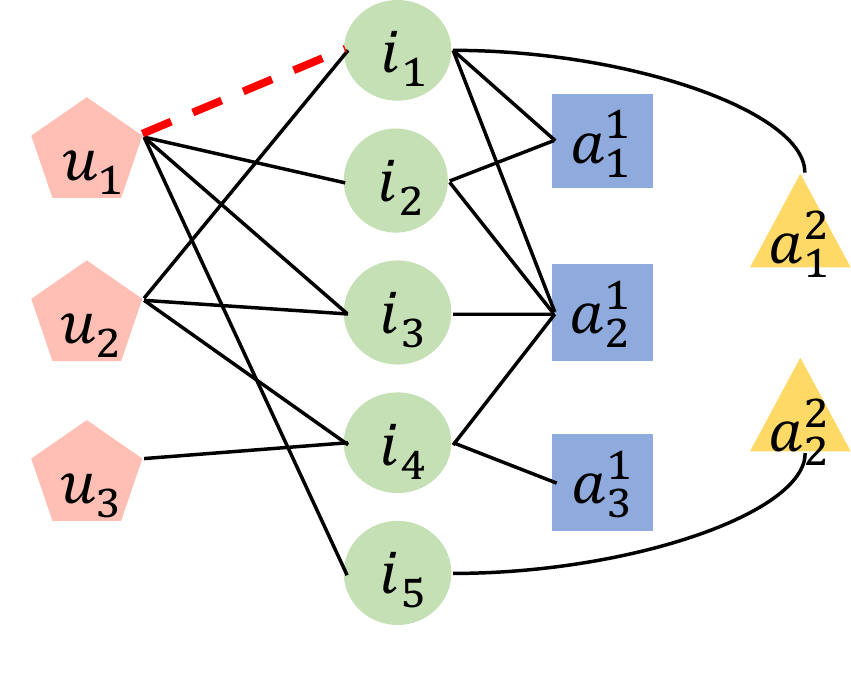}
\caption{A GNN predicted link $(u_1, i_1)$ ({\color{red} dashed red}) that needs explanation.}
\label{fig:p-a}
\end{subfigure}
\hfill
\begin{subfigure}[t]{0.41\textwidth}
\centering
\includegraphics[width=0.9\textwidth]{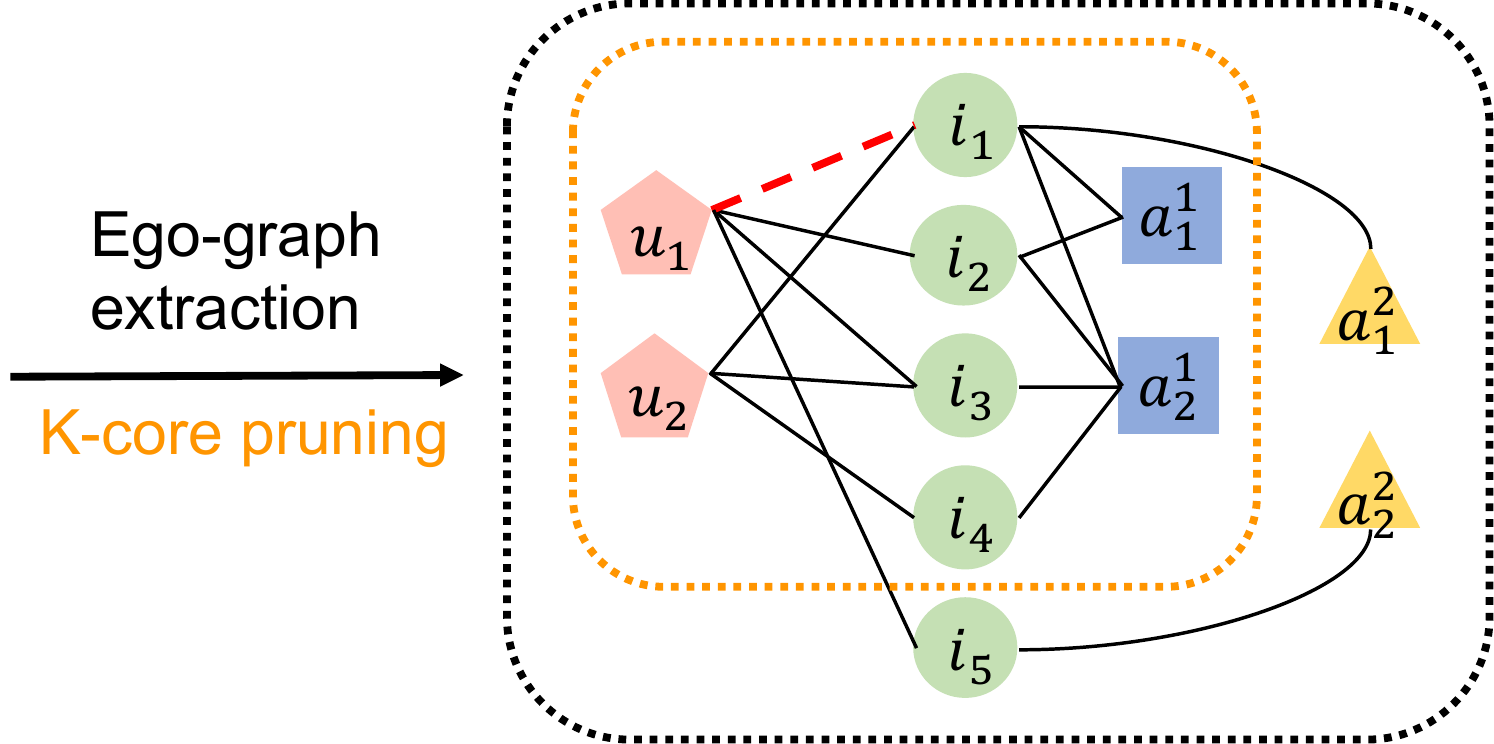}
\caption{Extract 2-hop ego-graph of $(u_1, i_1)$ excluding $u_3$ and $a^1_3$ (black box). Then prune it to get the k-core excluding $i_5$, $a^2_1$, and $a^2_2$ ({\color{YellowOrange} orange box}).}
\label{fig:p-b}
\end{subfigure}
\hfill
\begin{subfigure}[t]{0.33\textwidth}
\centering
\includegraphics[width=0.9\textwidth]{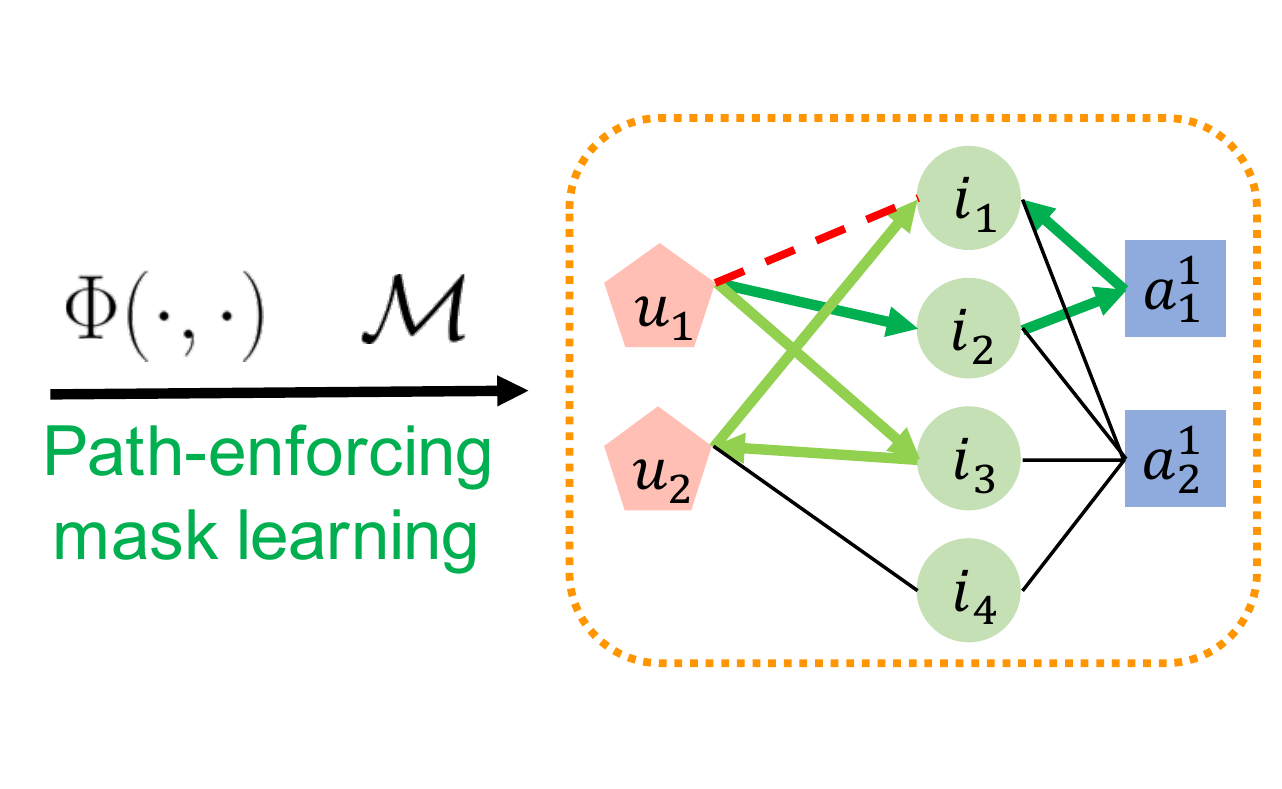}
\caption{Human-interpretable path explanations $(u_1, i_2, a_1^1, i_1)$ and $(u_1, i_3, u_2, i_1)$ ({\color{ForestGreen} green arrows}) that capture the connection between $u_1$ and $i_1$.} 
\label{fig:p-c}
\end{subfigure}
\caption{\method on a graph with user nodes $u$, item nodes $i$, and two attribute types $a^1$ and $a^2$. (Best viewed in color.) }
\label{fig:pipeline}
\Description{Framework of PaGE-Link, including the k-core pruning module and the path-enforcing mask learning module.}
\end{figure*}

In this work, we address a \textit{post hoc} and \textit{instance-level} GNN explanation problem. The post hoc means the model $\Phi(\cdot, \cdot)$ has been trained. To generate explanations, we won't change its architecture or parameters. The instance level means we generate an explanation for the prediction of each instance $(s, t)$. Specifically, the explanation method answers the question of why a missing link is predicted by $\Phi(\cdot, \cdot)$. In a practical web recommendation system, this question can be ``\textit{why an item is recommended to a user by the model}''.

An explanation for a GNN prediction should be some substructure in $\gGc$, and it should also be concise, i.e., limited by a size budget $B$. This is because an explanation with a large size is often neither informative nor interpretable, for example, an extreme case is that $\gGc$ could be a non-informative explanation for itself. Also, a fair comparison between different explanations should consume the same budget. In the following, we define budget $B$ as the maximum number of edges included in the explanation.

We list three desirable properties for a GNN explanation method on heterogeneous LP: capturing the connection between the source node and the target node, scalable to large graphs, and addressing graph heterogeneity. Using a path-based method inherently possesses all the properties. Paths capture the connection between a pair of nodes and can be transferred to human-interpretable explanations. Besides, the search space of paths with the fixed source node and the target node is greatly reduced compared to edge-induced subgraphs. Given the ego-graph $\gG_c$ of $s$ and $t$, the number of paths between $s$ and $t$ and the number of edge-induced subgraphs in $\gG_c$ both rely on the structure of $\gGc$. However, they can be estimated using random graph approximations. The next proposition on random graphs shows that the expected number of paths grows strictly slower than the expected number of edge-induced subgraphs as the random graph grows. Also, the expected number of paths becomes insignificant for large graphs. 
\begin{proposition} \label{prop:num_paths}
Let $\gG(n, d)$ be a random graph with n nodes and density d, i.e., there are $m=d \binom{n}{2}$ edges chosen uniformly randomly from all node pairs. Let $Z_{n,d}$ be the expected number of paths between any pair of nodes. Let $S_{n, d}$ be the expected number of edge-induced subgraphs. Then $Z_{n,d} = o(S_{n,d})$, i.e., $\lim_{n \to \infty} \frac{Z_{n,d}}{S_{n,d}} = 0$.
\end{proposition}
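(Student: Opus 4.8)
The plan is to show that $S_{n,d}$ lives on the scale $2^{\Theta(n^2)}$ while $Z_{n,d}$ is no larger than $2^{O(n\log n)}$, so that their ratio collapses to $0$.

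First I would compute $S_{n,d}$ outright. In the model $\gG(n,d)$ the number of edges $m=d\binom{n}{2}$ is deterministic; only the edge set is random. Since the edge-induced subgraphs of any graph are in bijection with the subsets of its edge set, every realization of $\gG(n,d)$ has exactly $2^{m}$ of them, hence
\[
S_{n,d}=2^{d\binom{n}{2}}=2^{\Theta(n^{2})}.
\]

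Next I would bound $Z_{n,d}$ from above by summing over path lengths. Fix a pair $(s,t)$. A simple path of length $k$ from $s$ to $t$ is specified by an ordered tuple of $k-1$ distinct internal vertices chosen from the other $n-2$, so there are at most $n^{k-1}$ such candidate paths, each requiring a fixed set of $k$ edges to be present. Under uniform sampling of $m$ edges without replacement, the probability that a prescribed set of $k$ edges all occur is $\prod_{i=0}^{k-1}\frac{m-i}{\binom{n}{2}-i}$; since $m\le\binom{n}{2}$ each factor is at most $m/\binom{n}{2}=d$, so this probability is at most $d^{k}$. Thus the expected number of length-$k$ paths is at most $n^{k-1}d^{k}$, and as a simple path has at most $n-1$ edges,
\[
Z_{n,d}\;\le\;\sum_{k=1}^{n-1}n^{k-1}d^{k}\;=\;\frac1n\sum_{k=1}^{n-1}(nd)^{k}\;\le\;(nd)^{n}\;=\;2^{O(n\log n)},
\]
the last step using that $d$ is a fixed density, so the geometric sum is dominated by its top term once $nd\ge 2$. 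Combining the two estimates,
\[
\frac{Z_{n,d}}{S_{n,d}}\;\le\;\frac{2^{O(n\log n)}}{2^{d\binom{n}{2}}}\;=\;2^{\,O(n\log n)-\Theta(n^{2})}\;\xrightarrow[n\to\infty]{}\;0,
\]
which is the claim $Z_{n,d}=o(S_{n,d})$.

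Every step here is routine arithmetic; the one place that needs a little care is the without-replacement probability bound — one must check that the factors $\frac{m-i}{\binom{n}{2}-i}$ are each at most $d$ before replacing them — together with being explicit that $d$ is bounded away from $0$, so that the geometric series and the final exponent comparison behave as claimed. As a bonus, the same computation gives the sharper statement alluded to in the text: $Z_{n,d}/S_{n,d}$ in fact decays like $2^{-\Theta(n^{2})}$, making paths ``negligible'' relative to subgraphs for large graphs. If one prefers $Z_{n,d}$ to count paths over all pairs rather than a fixed pair, multiplying the bound by $\binom{n}{2}$ leaves the asymptotics unchanged.
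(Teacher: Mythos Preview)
Your argument is correct and follows the same high-level outline as the paper: both compute $S_{n,d}=2^{d\binom{n}{2}}$ exactly, establish $\log Z_{n,d}=O(n\log n)$ versus $\log S_{n,d}=\Theta(n^2)$, and conclude by comparing exponents. The difference lies in how $Z_{n,d}$ is handled. The paper invokes an external asymptotic result giving $Z_{n,d}=(n-2)!\,d^{\,n-1}e(1+o(1))$ and then applies Stirling's formula to extract the $O(n\log n)$ bound on the logarithm; you instead derive an elementary upper bound directly by summing $n^{k-1}d^{k}$ over path lengths, using the without-replacement inequality $\frac{m-i}{\binom{n}{2}-i}\le d$. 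Your route is more self-contained and avoids both the citation and Stirling, at the modest cost of losing the exact leading constant for $Z_{n,d}$---which is irrelevant for the proposition anyway. Your explicit remark that $d$ must be bounded away from $0$ is also a point the paper leaves implicit.
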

\begin{proof}
    In Appendix \ref{app:num_paths_proof}.
\end{proof}

Paths are also a natural choice for LP explanations on heterogeneous graphs. On homogeneous graphs, features are important for prediction and explanation. A $s$-$t$ link may be predicted because of the feature similarity of node $s$ and node $t$. However, the heterogeneous graphs we focus on, as defined in Definition~\ref{def:hetero}, often do not store feature information but explicitly model it using new node and edge types. For example, for the heterogeneous graph in Figure~\ref{fig:p-a}, instead of making it a user-item graph and assigning each item node a two-dimensional feature with attributes $a^1$ and $a^2$, the attribute nodes are explicitly created and connected to the item nodes. Then an explanation like ``$i_1$ and $i_2$ share node feature $a^1_1$'' on a homogeneous graph is transferred to ``$i_1$ and $i_2$ are connected through the attribute node $a^1_1$'' on a heterogeneous graph.

Given the advantages of paths over general subgraphs on connection interpretability, scalability, and their capability to capture feature similarity on heterogeneous graphs, we use paths to explain GNNs for heterogeneous LP. Our design principle is that a good explanation should be concise and informative, so we define the explanation to contain only \textit{short} paths \textit{without high-degree} nodes. Long paths are less desirable since they could correspond to unnecessarily complicated connections, making the explanation neither concise nor convincing. For example, in Figure~\ref{fig:p-c}, the long path $(u_1, i_3, a^1_2, i_2, a^1_1, i_1)$ is not ideal since it takes four hops to go from item $i_3$ to the item $i_1$, making it less persuasive to be interpreted as ``item1 and item3 are similar so item1 should be recommended''. Paths containing high-degree nodes are also less desirable because high-degree nodes are often generic, and a path going through them is not as informative. In the same figure, all paths containing node $a^1_2$ are less desirable because $a^1_2$ has a high degree and connects to all the items in the graph. A real example of a generic attribute is the attribute ``grocery'' connecting to both ``vanilla ice cream'' and ``vanilla cookie''. When ``vanilla ice cream'' is recommended to a person who bought ``vanilla cookie'', explaining this recommendation with a path going through ``grocery'' is not very informative since ``grocery'' connects many items. In contrast, a good informative path explanation should go through the attribute ``vanilla'', which only connects to vanilla-flavored items and has a much lower degree. 

We formalize the GNN explanation for heterogeneous LP as:

\begin{problem}[]
\label{prob:path}
Generating path-based explanations for a predicted link between node $s$ and $t$:
\begin{itemize}[leftmargin=10pt]
    \item \textbf{Given}
    \begin{itemize}[leftmargin=5pt]
        \item a trained GNN-based LP model $\Phi(\cdot, \cdot)$,
        \item a heterogeneous computation graph $\gGc$ of $s$ and $t$,
        \item a budget $B$ of the maximum number of edges in the explanation,
    \end{itemize}
    \item \textbf{Find} an explanation $\P = $ \{ $p | p$ is a $s$-$t$ path with maximum length $l_{max}$ and degree of each node less than $D_{max}$ \}, $|\P|l_{max} \leq B$, 
    \item \textbf{By optimizing} $p \in \P$ to be influential to the prediction, concise, and informative.
\end{itemize}
\end{problem}

\section{Proposed method: PaGE-Link}\label{sec:method}
This section details \method. \method has two modules: (i) a $k$-core pruning module to eliminate spurious neighbors and improve speed, and (ii) a heterogeneous path-enforcing mask learning module to identify important paths. An illustration is in Figure~\ref{fig:pipeline}.

\subsection{The k-core Pruning}
The \textit{$k$-core pruning} module of \method reduces the complexity of $\gG_c$. The $k$-core of a graph is defined as the unique maximal subgraph with a minimum node degree $k$ \cite{kcore-def}. We use the superscript $k$ to denote the $k$-core, i.e., $\gG_c^k = (\E_c^k, \V_c^k)$ for the $k$-core of $\gG_c$. The $k$-core pruning is a recursive algorithm that removes nodes $v \in \V$ such that their degrees $D_v < k$, until the remaining subgraph only has nodes with $D_v \geq k$, which gives the $k$-core. The difference in nodes between a $(k+1)$-core and a $k$-core is called the $k$-shell. The nodes in the orange box of Figure~\ref{fig:p-b} is an example of a $2$-core pruned from the $2$-hop ego-graph, where node $a^2_1$ and $a^2_2$ are pruned in the first iteration because they are degree one. Node $i_5$ is recursively pruned because it becomes degree one after node $a^2_2$ is pruned. All those three nodes belong to the $1$-shell. We perform $k$-core pruning to help path finding because the pruned $k$-shell nodes are unlikely to be part of meaningful paths when $k$ is small. For example, the $1$-shell nodes are either leaf nodes or will become leaf nodes during the recursive pruning, which will never be part of a path unless $s$ or $t$ are one of these $1$-shell nodes. The $k$-core pruning module in \method is modified from the standard $k$-core pruning by adding a condition of never pruning $s$ and $t$. 

The following theorem shows that for a random graph $\gG(n, d)$, $k$-core will reduce the expected number of nodes by a factor of $\delta_{\V}(n, d, k)$ and reduce the expected number of edges by a factor of $\delta_{\E}(n, d, k)$. Both factors are functions of $n$, $d$, and $k$. We defer the exact expressions of these two factors 
in Appendix \ref{app:kcore_size}, since they are only implicitly defined based on Poisson distribution. Numerically, for a random $\gG(n, d)$ with average node degree $d(n-1) = 7$, its 5-core has $\delta_{\V}(n, d, 5)$ and $\delta_{\E}(n, d, 5)$ both $ \approx 0.69$. 

\begin{theorem}[Pittel, Spencer and Wormald \cite{kcore-emergence}] \label{thm:kcore_size}
Let $\gG(n, d)$ be a random graph with $m$ edges as in Proposition \ref{prop:num_paths}. Let $\gG^k(n, d) = (\V^k(n, d), \E^k(n, d))$ be the nonempty $k$-core of $\gG(n, d)$. Then $\gG^k(n, d)$ contain $\delta_{\V}(n, d, k) n$ nodes and $\delta_{\E}(n, d, k) m$ edges with high probability for large n, i.e., $|\V^k(n, d)| / n \xrightarrow{p} \delta_{\V}(n, d, k)$ and $|\E^k(n, d)| / m \xrightarrow{p} \delta_{\E}(n, d, k)$ ($\xrightarrow{p}$ stands for convergence in probability). \end{theorem}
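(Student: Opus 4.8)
The plan is to reprove this as the Pittel--Spencer--Wormald theorem on the emergence of the $k$-core, in the streamlined spirit of the Janson--Luczak argument. Since $m=d\binom{n}{2}$ is deterministic and the uniform $m$-edge model $\gG(n,d)$ is contiguous with the binomial model at edge probability $p=d$ (condition the latter on its edge count, which concentrates at $m$), I would first reduce to proving the statement for $G(n,p)$ with asymptotic mean degree $c:=d(n-1)$; the degree of a fixed vertex is then $\mathrm{Bin}(n-1,d)\to\mathrm{Poisson}(c)$. The heuristic I would follow is the local weak limit of $G(n,p)$ around a uniform vertex, the Poisson Galton--Watson tree with offspring mean $c$. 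On that tree, run $k$-core peeling ``from the leaves'': call a parent--child edge \emph{good} when the child has at least $k-1$ good edges to its own children, and take the smallest solution of this recursion. Poisson thinning makes the number of good child-edges of a vertex $\mathrm{Poisson}(cq)$, where $q$ is the probability a generic parent--child edge is good, so $q$ is the largest root in $[0,1]$ of
\[
q=\Pr\!\big[\mathrm{Poisson}(cq)\ge k-1\big].
\]
For $k\ge 3$ this has a positive root precisely above the $k$-core threshold, i.e.\ precisely when the core is nonempty. A vertex lies in the tree $k$-core iff it has $\ge k$ good incident edges, an event of limiting probability $\Pr[\mathrm{Poisson}(cq)\ge k]$; an edge lies in the $k$-core iff each of its two endpoints has $\ge k-1$ good edges among its \emph{other} incident edges, two independent events of probability $q$ each (this is exactly the fixed-point equation), giving $q^2$. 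This is the origin of the ``implicitly defined'' factors: I would \emph{define} $\delta_{\V}(n,d,k):=\Pr[\mathrm{Poisson}(cq)\ge k]$ and $\delta_{\E}(n,d,k):=q^2$ with $q=q(n,d,k)$ the above fixed point, matching Appendix~\ref{app:kcore_size}.

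To turn the heuristic into the stated convergence in probability, I would analyze the peeling process directly on a configuration-model realization of $G(n,m)$ with the near-Poisson degree sequence: repeatedly delete a uniformly random vertex of current degree $<k$ together with its half-edges. Because a partially peeled configuration model is still a uniform pairing on the residual half-edges, the vector counting residual vertices of each degree evolves as a density-dependent Markov chain; by the differential equation method (equivalently, Kurtz's functional law of large numbers) its rescaled trajectory concentrates, uniformly over the whole run, around the solution of an explicit ODE system whose coordinates are $\mathrm{Poisson}(c\,\xi(\tau))$ tail probabilities for a monotone ``good-fraction'' coordinate $\xi$. The process stops at the empty graph (no $k$-core) or at the first time no vertex of degree $<k$ remains, where the residual graph is exactly the $k$-core; reading the ODE solution off at that stopping time yields $|\V^k(n,d)|/n\xrightarrow{p}\delta_{\V}(n,d,k)$ and $|\E^k(n,d)|/m\xrightarrow{p}\delta_{\E}(n,d,k)$. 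A matching upper bound is also available from a union bound (each peeled vertex has a bounded-depth witness tree) and a matching lower bound from a first-moment estimate showing the ODE-predicted residual subgraph has minimum degree $\ge k$ with high probability, hence sits inside the $k$-core; Azuma--Hoeffding applied to the order in which vertices are peeled upgrades expectations to high-probability statements.

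The step I expect to be the main obstacle is this transfer from the tree picture to the finite graph. For $k\ge 3$ the $k$-core is \emph{not} a continuous local functional of $\gG(n,d)$ -- the transition in $c$ is discontinuous, so a peeling cascade can be macroscopic -- so local weak convergence alone does not suffice, and the real content lies in controlling the dynamics near the end of the process. One must show not only that the peeling trajectory concentrates but also that the \emph{stopping time} concentrates: the residual creation rate of vertices of degree $<k$ must cross zero transversally (equivalently, $q$ must be a fixed point of $\lambda\mapsto\Pr[\mathrm{Poisson}(c\lambda)\ge k-1]$ where the map meets the diagonal with a genuine slope gap, which holds for every $c$ except the isolated threshold value), so that the finite-$n$ process does not oscillate back across the critical state. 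Making this near-criticality estimate rigorous is the crux and is where the Pittel--Spencer--Wormald and Janson--Luczak arguments do their real work; the reductions and the branching-process bookkeeping in the earlier steps are comparatively routine.
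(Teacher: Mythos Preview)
Your proposal is a correct and well-structured sketch of the Janson--Luczak streamlining of the original Pittel--Spencer--Wormald argument, but it goes far beyond what the paper itself does. The paper does not prove this theorem at all: its proof block simply reads ``Please refer to Appendix~\ref{app:kcore_size} and \cite{kcore-emergence}'', and Appendix~\ref{app:kcore_size} merely restates the result with the implicit constants made explicit (via the Poisson tail $\psi_k$ and the larger root $\mu(dn,k)$ of $\mu/\psi_{k-1}(\mu)=dn$), again deferring to \cite{kcore-asymptotic, kcore-emergence} for the actual argument. So there is no in-paper proof to compare against; the theorem is quoted as a black box from the literature.

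That said, your bookkeeping lines up with the paper's explicit constants. Setting $c=d(n-1)$ and $\mu=cq$, your fixed-point equation $q=\Pr[\mathrm{Poisson}(cq)\ge k-1]$ rearranges to $\mu/\psi_{k-1}(\mu)=c\sim dn$, so your $q$ is exactly $\mu(dn,k)/c$. Then your $\delta_{\V}=\Pr[\mathrm{Poisson}(cq)\ge k]=\psi_k(\mu)$ matches the paper's node factor verbatim, and your $\delta_{\E}=q^2=\mu^2/c^2=\mu^2/(d^2(n-1)^2)$ agrees with the paper's edge factor $\mu^2/(d^2n(n-1))$ up to a $(n-1)/n\to 1$ correction. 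Your identification of the main obstacle---transversality of the fixed point so that the stopping time of the peeling process concentrates away from the threshold---is also the correct place where the real analytic work lives in both the original and the Janson--Luczak proofs. In short: nothing is wrong with your proposal, but for the purposes of this paper a citation is all that is expected.
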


\begin{proof}
    Please refer to Appendix \ref{app:kcore_size} and \cite{kcore-emergence}.
\end{proof}

The $k$-core pruning helps reduce the graph complexity and accelerates path finding. One concern is whether it prunes too much and disconnects $s$ and $t$. We found that such a situation is very unlikely to happen in practice. To be specific, we focus on explaining positively predicted links, e.g. why an item is recommended to a user by the model. Negative predictions, e.g., why an arbitrary item is not recommended to a user by the model, are less useful in practice and thus not in the scope of our explanation. $(s, t)$ node pairs are usually connected by many paths in a practical $\gG$~\cite{smallworld}, and positive link predictions are rarely made between disconnected or weakly-connected $(s, t)$. Empirically, we observe that there are usually too many paths connecting a positively predicted $(s, t)$ instead of no paths, even in the $k$-core. Therefore, an optional step to enhance pruning is to remove nodes with super-high degrees. As we discussed in Section \ref{sec:problem}, high-degree nodes are often generic and less informative. Removing them can be a complement to k-core to further reduce complexity and improve path quality.

\subsection{Heterogeneous Path-Enforcing Mask Learning}
The second module of \method learns heterogeneous masks to find important path-forming edges. We perform mask learning to select edges from the $k$-core-pruned computation graph. For notation simplicity in this section, we use $\gG = (\V, \E)$ to denote the graph for mask learning to save superscripts and subscripts, and $\gG_c^k$ is the actual graph in the complete version of our algorithm. 

The idea is to learn a mask over all edges of all edge types to select the important edges. Let $\E^r = \{e \in \E | \tau(e) = r\}$ be edges with type $r \in \R$. Let $\M = \{\M^r\}_{r=1}^{|\R|}$ be learnable masks of all edge types, with $\M^r \in \mathbb{R}^{|\E^r|}$ corresponds type $r$. We denote applying $\M^r$ on its corresponding edge type by $\E^r \odot \sigma(\M^r)$, where $\sigma$ is the sigmoid function, and $\odot$ is the element-wise product. Similarly, we also overload the notation $\odot$ to indicate applying the set of masks on all types of edges, i.e., $\E \odot \sigma(\M) = \cup_{r \in \R} \{\E^r \odot \sigma(\M^r) \}$. We call the graph with the edge set $\E \odot \sigma(\M)$ a \textit{masked graph}. Applying a mask on graph edges will change the edge weights, which makes GNNs pass more information between nodes connected by highly-weighted edges and less on others. The general idea of mask learning is to learn an $\M$ that produces high weights for important edges and low weights for others. To learn an $\M$ that better fits the LP explanation, we measure edge importance from two perspectives: important edges should be both influential for the model prediction and form meaningful paths. Below, we introduce two loss terms $\Ls_{pred}$ and $\Ls_{path}$ for achieving these two measurements.

$\Ls_{pred}$ is to learn to select influential edges for model prediction. The idea is to do a perturbation-based explanation, where parts of the input are considered important if perturbing them changes the model prediction significantly. In the graph sense, if removing an edge $e$ significantly influences the prediction, then $e$ is a critical counterfactual edge that should be part of the explanation. This idea can be formalized as maximizing the mutual information between the masked graph and the original graph prediction $Y$, which is equivalent to minimizing the prediction loss
\begin{equation} \label{eq:pred_loss}
    \Ls_{pred}(\M) = - \log P_{\Phi} (Y = 1 | \gG = (\V, \E \odot \sigma(\M)), (s, t)).
\end{equation}
 
$\Ls_{pred}(\M)$ has a straightforward meaning, which says the masked subgraph should provide enough information for predicting the missing link $(s, t)$ as the whole graph. Since the original prediction is a constant, $\Ls_{pred}(\M)$ can also be interpreted as the performance drop after the mask is applied to the graph. A well-masked graph should give a minimum performance drop. Regularizations of the mask entropy and mask norm are often included in $\Ls_{pred}(\M)$ to encourage the mask to be discrete and sparse. 

$\Ls_{path}$ is the loss term for $\M$ to learn to select path-forming edges. The idea is to first identify a set of candidate edges denoted by $\E_{path}$ (specified below), where these edges can form concise and informative paths, and then optimize $\Ls_{path} (\M)$ to enforce the mask weights for $e \in \E_{path}$ to increase and mask weights for $e \notin \E_{path}$ to decrease. We considered a weighted average of these two forces balanced by hyperparameters $\alpha$ and $\beta$,
\begin{equation} \label{eq:path_loss}
    \Ls_{path}(\M) = - \sum_{r \in \R} (\alpha \sum_{\substack{e \in \E_{path} \\ \tau(e) = r}} \M_{e}^r - \beta \sum_{\substack{e \in \E, e \notin \E_{path} \\ \tau(e) = r}}  \M_{e}^r).
\end{equation}

The key question for computing $\Ls_{path}(\M)$ is to find a good $\E_{path}$ containing edges of concise and informative paths. As in Section \ref{sec:problem}, paths with these two desired properties should be short and without high-degree generic nodes. We thus define a score function of a path $p$ reflecting these two properties as below 
\begin{align}
    Score(p) &= \log \prod_{\substack{e \in p \\ e = (u, v)}} \frac{P(e)}{D_v} = \sum_{\substack{e \in p \\ e = (u, v)}} Score(e), \\
    Score(e) &= \log \sigma (\M^{\tau(e)}_{e}) - \log(D_v). \label{eq:path_score}
\end{align} 
In this score function, $\M$ gives the probability of $e$ to be included in the explanation, i.e., $P(e) = \sigma(\M_{e}^{\tau(e)})$. To get the importance of a path, we first use a mean-field approximation for the joint probability by multiplying $P(e)$ together, and we normalize each $P(e)$ for edge $e = (u, v)$ by its target node degree $D_v$. Then, we perform log transformation, which improves numerical stability for multiplying many edges with small $P(e)$ or large $D_v$ and break down a path score to a summation of edge scores $Score(e)$ that are easier to work with. This path score function captures both desired properties mentioned above. A path score will be high if the edges on it have high probabilities and these edges are linked to nodes with low degrees. Finding paths with the highest $Score(p)$ can be implemented using Dijkstra's shortest path algorithm \cite{dijkstra1959note}, where the distance represented by each edge is set to be the negative score of the edge, i.e., $- Score(e)$. We let $\E_{path}$ be the set of edges in the top five shortest paths found by Dijkstra's algorithm.

\subsection{Mask Optimization and Path Generation}
We optimize $\M$ with both $\Ls_{pred}$ and $\Ls_{path}$. $\Ls_{pred}$ will increase the weights of the prediction-influential edges. $\Ls_{path}$ will further increase the weights of the path-forming edges that are also highly weighted by the current $\M$ and decrease other weights. Finally, after the mask learning converges, we run one more shortest-path algorithm to generate paths from the final $\M$ and select the top paths according to budget $B$ to get the explanation $\P$ defined in Section \ref{sec:problem}. A pseudo-code of \method is shown in Algorithm \ref{algo:code}.

\begin{algorithm}[t]
  \caption{\method}
  \label{alg:pagelink}
\begin{algorithmic}
  \State {\bfseries Input:} heterogeneous graph $\gG$,  trained GNN-based LP model $\Phi(\cdot, \cdot)$, predicted link $(s, t)$, size budget $B$, k for k-core, hyperparameters $\alpha$ and $\beta$, learning rate $\eta$, maximum iterations $T$.
\State {\bfseries Output:} Explanation as a set of paths $\P$.
\State Extract the computation graph $\gG_c$;
\State Prune $\gG_c$ for the k-core $\gG_c^k$;
\State Initialize $\M^{(0)}$;
\State $t = 0$;
\While{ $\M^{(t)}$ not converge and $t < T$}
\State Compute $\Ls_{pred}(\M^{(t)})$; \Comment{ Eq.(\ref{eq:pred_loss})}
\State Compute $Score(e)$ for each edge $e$; \Comment{ Eq.(\ref{eq:path_score})}
\State Construct $\E_{path}$ by finding shortest paths on $\gG_c^k$ with edge distance $-Score(e)$;
\State Compute $\Ls_{path}(\M^{(t)})$ according to $\E_{path}$; \Comment{ Eq.(\ref{eq:path_loss})}
\State $\M^{(t+1)} = \M^{(t)} - \eta \nabla (\Ls_{pred}(\M^{(t)})$ + $\Ls_{path}(\M^{(t)}))$; 
\State t += 1;
\EndWhile
\State $\P = $ Under budget $B$, the top shortest paths on $\gG_c^k$ with edge distance $-Score(e)$;
\State \textbf{Return:} $\P$.
\end{algorithmic}
\label{algo:code}
\end{algorithm}

\subsection{Complexity Analysis} \label{subsec:complexity}
\begin{table}[t]
\center
\footnotesize
\caption{Time complexity of \method and other methods. }
\begin{tabular}{@{}ccc|c@{}}
\toprule
GNNExp~\cite{gnnexplainer} & PGExp~\cite{pgexplainer} & SubgraphX~\cite{subgraphx} & \method (ours)  \\ \midrule    
$O(|\E_c| T)$ & $O(|\E| T)$ \big/ $O(|\E_c|)$ & $\Theta(|\V_c|\hat D^{2B_{node} - 2})$ & $O(|\E_c| + |\E_c^k|T)$\\
\bottomrule
\end{tabular}
\label{tab:complexity}
\Description{PaGE-Link has better time complexity than baselines}
\end{table}

In Table \ref{tab:complexity}, we summarize the time complexity of \method and representative existing methods for explaining a prediction with computation graph $\gG_c = (\V_c, \E_c)$ on a full graph $\gG = (\V, \E)$. Let $T$ be the mask learning epochs. GNNExplainer has complexity $|\E_c|T$ as it learns a mask on $\E_c$. PGExplainer has a training stage and an inference stage (separated by / in the table). The inference stage is linear in $|\E_c|$, but the training stage covers edges in the entire graph and thus scales in $O(|\E| T)$. SubgraphX has a much higher time complexity exponential in $|\V_c|$, so a size budget of $B_{node}$ nodes is forced to replace $|\V_c|$, and $\hat D = \max_{v \in \V} D_v$ denotes the maximum degree (derivation in Appendix \ref{app:complexity}). For \method, the k-core pruning step is linear in $|\E_c|$. The mask learning with Dijkstra's algorithm has complexity $|\E_c^k|T$. \method has a better complexity than existing methods since $|\E_c^k|$ is usually smaller than $|\E_c|$ (see Theorem \ref{thm:kcore_size}), and \method often converges faster, i.e., has a smaller $T$, as the space of candidate explanations is smaller (see Proposition \ref{prop:num_paths}) and noisy nodes are pruned.

\section{Experiments} \label{sec:experiment}
In this section, we conduct empirical studies to evaluate explanations generated by \method. Evaluation is a general challenge when studying model explainability since standard datasets do not have ground truth explanations. Many works~\cite{gnnexplainer, pgexplainer} use synthetic benchmarks, but no benchmarks are available for evaluating GNN explanations for heterogeneous LP. Therefore, we generate an augmented graph and a synthetic graph to evaluate explanations. They allow us to generate ground truth explanation patterns and evaluate explainers quantitatively.

\subsection{Datasets} \label{subsec:dataset}

\begin{figure*}[t]
\centering
\begin{subfigure}[t]{0.25\textwidth}
\includegraphics[width=0.9\textwidth]{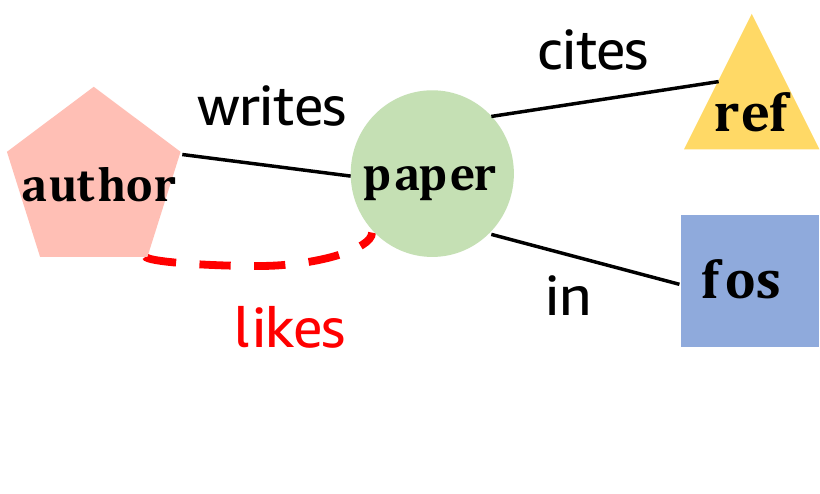}
\caption{Schema of \aminer. ``writes'', ``cites'', and ``in'' edges are original. The ``likes'' edges ({\color{red} dashed red}) are augmented for prediction.} 
\centering
\label{fig:create_graph_a}
\end{subfigure}
\hfill
\begin{subfigure}[t]{0.73\textwidth}
\centering
\includegraphics[width=0.9\textwidth]{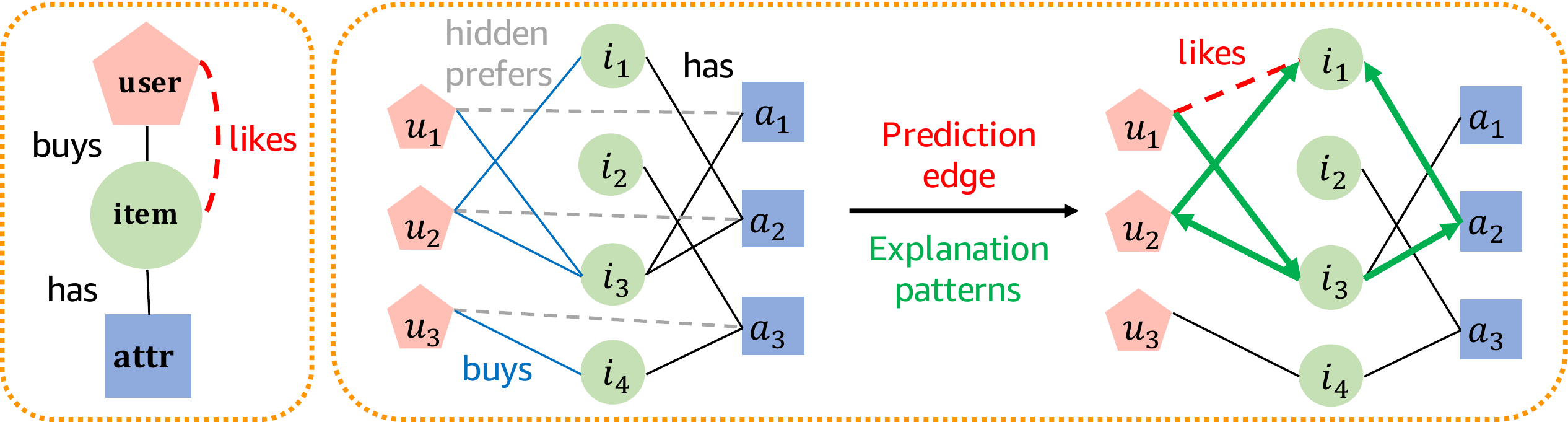}
\caption{Schema of \synthetic (the left box) and its generation process (the right box). Three types of base edges are generated first, i.e., ``has'' (black), ``hidden prefers'' ({\color{gray} dashed gray}), and ``buys'' ({\color{RoyalBlue} blue}). The solid ``has'' and ``buys'' edges are then used to generate ``likes'' edges ({\color{red} dashed red}) for prediction and the ground truth explanation patterns ({\color{ForestGreen} green arrows}).} 
\label{fig:create_graph_b}
\end{subfigure} 
\caption{The proposed augmented graph \aminer and the synthetic graph \synthetic.}
\label{fig:create_graph}
\Description{The proposed augmented graph has four types of nodes and three types of edges. The synthetic graph has three types of nodes and two types of edges. A new type ``likes'' is generated for both graphs and used for prediction.}
\end{figure*}

\paragraph{The augmented graph} \aminer is constructed by augmenting the AMiner citation network \cite{tang2008arnetminer}. A graph schema is shown in Figure~\ref{fig:create_graph_a}. The original AMiner graph contains four node types: author, paper, reference (ref), and field of study (fos), and edge types ``cites'', ``writes'', and ``in''. We construct \aminer by augmenting the original graph with new (author, paper) edges typed ``likes'' and define a paper recommendation task on \aminer for predicting the ``like'' edges. A new edge $(s, t)$ is augmented if there is at least one concise and informative path $p$ between them. In our augmentation process, we require the paths $p$ to have lengths shorter than a hyperparameter $l_{max}$ and with degrees of nodes on $p$ (excluding $s$ \& $t$) bounded by a hyperparameter $D_{max}$. We highlight these two hyperparameters because of the conciseness and informativeness principles discussed in Section~\ref{sec:problem}. The augmented edge $(s, t)$ is used for prediction. The ground truth explanation is the set of paths satisfying the two hyperparameter requirements. We only take the top $P_{max}$ paths with the smallest degree sums if there are many qualified paths. We train a GNN-based LP model to predict these new ``likes'' edges and evaluate explainers by comparing their output explanations with these path patterns as ground truth.

\paragraph{The synthetic graph} \synthetic is generated to mimic graphs with users, items, and attributes for recommendations. Figure~\ref{fig:create_graph_b} shows the graph schema and illustrates the generation process. We include three node types: ``user'', ``item'', and item attributes (``attr'') in the synthetic graph, and we build different types of edges step by step. Firstly, the ``has'' edges are created by randomly connecting items to attrs, and the ``hidden prefers'' edges are created by randomly connecting users to attrs. These edges represent items having attributes and user preferences for these attributes. Next, we randomly sample a set of items for each user, and we connect a (user, item) pair by a ``buys'' edge, if the user ``hidden prefers'' any attr the item ``has''. The ``hidden prefers'' edges correspond to an intermediate step for generating the observable ``buys'' edges. We remove the ``hidden prefers'' edges after ``buys'' edges are generated since we cannot observe `hidden prefers'' information in reality. An example of the rationale behind this generation step is that items have certain attributes, like the item ``ice cream'' with the attribute ``vanilla''. Then given that a user likes the attribute ``vanilla'' as hidden information, we observe that the user buys ``vanilla ice cream''. The next step is to generate more `buys'' edges between randomly picked (user, item) pairs if a similar user (two users with many shared item neighbors) buys this item. The idea is like collaborative filtering, which says similar users tend to buy similar items. The final step is generating edges for prediction and their corresponding ground truth explanations, which follows the same augmentation process described above for \aminer. For \synthetic, we have ``has'' and ``buys'' as base edges to construct the ground truth, 
and we create ``likes'' edges between users and items for prediction.

\begin{figure*}[t]
\centering
\includegraphics[width=\textwidth]{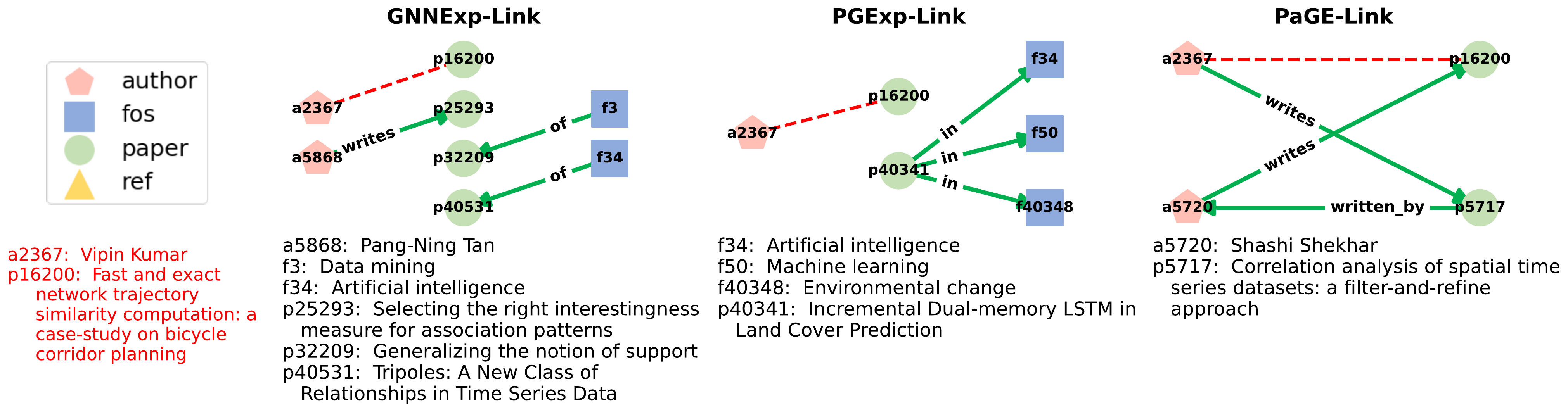}
\caption{Explanations ({\color{ForestGreen} green arrows}) by different explainers for the predicted link $(a2367, p16200)$ ({\color{red} dashed red}). \method explanation explains the recommendation by co-authorship, whereas baseline explanations are less interpretable.} 
\label{fig:case1}
\Description{PaGE-Link generates path explanation and the baselines fail to generate paths.}
\end{figure*}

\begin{figure}[t]
\centering
\includegraphics[width=\columnwidth]{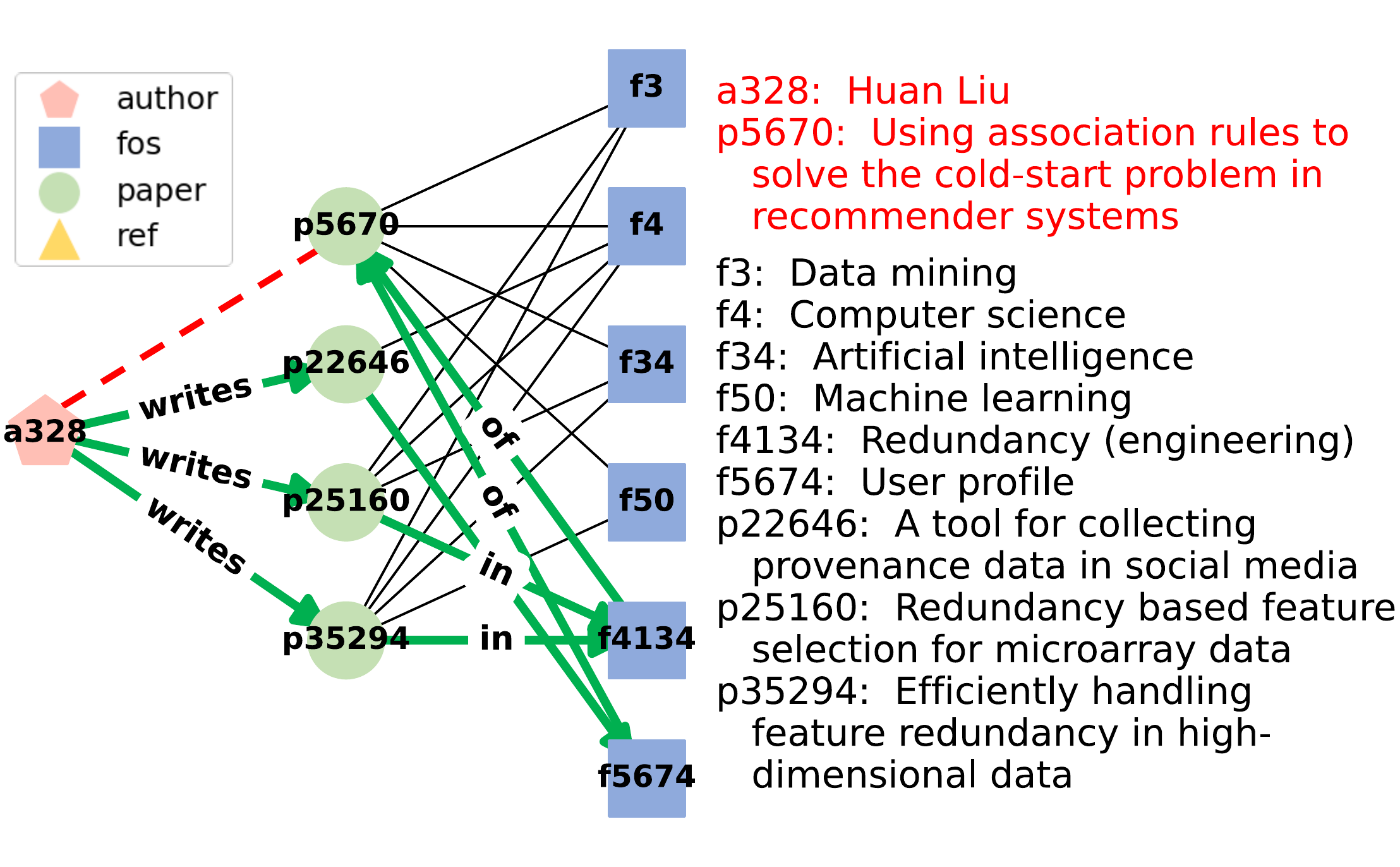}
\caption{Top three paths ({\color{ForestGreen} green arrows}) selected by \method for explaining the predicted link $(a328, p5670)$ ({\color{red} dashed red}). The selected paths are short and do not go through a generic field of study like ``Computer Science''.} 
\label{fig:case2}
\Description{PaGE-Link identifies concise and informative paths among many path choices.}
\end{figure}

\subsection{Experiment Settings}

\paragraph{The GNN-based LP model} As described in Section \ref{sec:preliminary}, the LP model involves a GNN encoder and a prediction head. We use RGCN \cite{rgcn} as the encoder to learn node representations on heterogeneous graphs and the inner product as the prediction head. We train the model using the cross-entropy loss. On each dataset, our prediction task covers one edge type $r$. We randomly split the observed edges of type $r$ into train:validation:test = 7:1:2 as positive samples and draw negative samples from the unobserved edges of type $r$. Edges of other types are used for GNN message passing but not prediction. 

\paragraph{Explainer baselines.} Existing GNN explanation methods cannot be directly applied to heterogeneous LP. Thus, we extend the popular GNNExplainer \cite{gnnexplainer} and PGExplainer \cite{pgexplainer} as our baselines. We re-implement a heterogeneous version of their mask matrix and mask predictor similar to the heterogeneous mask learning module in \method. For these baselines, we perform mask learning using their original objectives, and we generate edge-induced subgraph explanations from their learned mask. We refer to these two adapted explainers as GNNExp-Link and PGExp-Link below. We do not compare to other search-based explainers like SubgraphX~\cite{subgraphx} because of their high computational complexity (see Section \ref{subsec:complexity}). They work well on small graphs as in the original papers, but they are hard to scale to large and dense graphs we consider for LP.

\begin{table}[t]
\center
\small
\caption{ROC-AUC scores on learned masks. \method outperforms baselines. }
\begin{tabular}{@{}rcc|c@{}}
\toprule
& GNNExp-Link & PGExp-Link & \method (ours)  \\ \midrule    
\aminer & 0.829 & 0.586 & \textbf{0.928} \\ 
\synthetic & 0.608 & 0.578 & \textbf{0.954} \\ 
\bottomrule
\end{tabular}
\label{tab:auc}
\Description{PaGE-Link has better ROC-AUC scores than baselines.}
\end{table}

\subsection{Evaluation Results} \label{subsec:exp_results}

\paragraph{Quantitative evaluation.}
Both the ground truth and the final explanation output of \method are sets of paths. In contrast, the baseline explainers generate edge masks $\M$. For a fair comparison, we take the intermediate result \method learned, also the mask $\M$, and we follow \cite{pgexplainer} to compare explainers by their masks. Specifically for each computation graph, edges in the ground truth paths are treated as positive, and other edges are treated as negative. Then weights in $\M$ are treated as the prediction scores of edges and are evaluated with the ROC-AUC metric. A high ROC-AUC score reflects that edges in ground truth are precisely captured by the mask. The results are shown in Table \ref{tab:auc}, where \method outperforms both baseline explainers.

For scalability, we showed \method scales linearly in $O(|\E_c^k|)$ in Section \ref{subsec:complexity}. Here we evaluate its scalability empirically by generating ten synthetic graphs with various sizes from 20 to 5,500 edges in $\gG_c$. The results are shown in Figure~\ref{fig:crown_jewel2}b, which suggests the computation time scales linearly in the number of edges. 

\paragraph{Qualitative evaluation.}
A critical advantage of \method is that it generates path explanations, which can capture the connections between node pairs and enjoy better interpretability. In contrast, the top important edges found by baseline methods are often disconnected from the source, the target, or both, which makes their explanations hard for humans to interpret and investigate. We conduct case studies to visualize explanations generated by \method on the paper recommendation task on \aminer. 

Figure \ref{fig:case1} shows a case in which the model recommends the source author ``Vipin Kumar'' the recommended target paper titled ``Fast and exact network trajectory similarity computation: a case-study on bicycle corridor planning''. The top path explanation generated by \method goes through the coauthor ``Shashi Shekhar'', which explains the recommendation as Vipin Kumar and Shashi Shekhar coauthored the paper ``Correlation analysis of spatial time series datasets: a filter-and-refine approach'', and Shashi Shekhar wrote the recommended paper. Given the same budget of three edges, explanations generated by baselines are less interpretable.

Figure \ref{fig:case2} shows another example with the source author ``Huan Liu'' and the recommended target paper titled ``Using association rules to solve the cold-start problem in recommender systems''. \method generates paths going through the common fos of the recommended paper and three other papers written by Huan Liu: $p22646$, $p25160$, and $p35294$. We show the \method explanation with the top three paths in green. We also show other unselected fos shared by the $p22646$, $p25160$, and $p35294$ and the target paper. Note that the explanation paths all have length three, even though there are many paths with length five or longer, e.g., $(a328, p22646, f4, p25260, f4134, p5670)$. Also, the explanation paths go through the fos ``Redundancy (engineering)'' and ``User profile'' instead of generic fos like ``Artificial intelligence'' and ``Computer science''. This case demonstrates that explanation paths selected by \method are more concise and informative.

\section{Human Evaluation} \label{sec:human}
The ultimate goal of model explanation is to improve model transparency and help human decision-making. Human evaluation is thus the best way to evaluate the effectiveness of an explainer, which has been a standard evaluation approach in previous works \cite{grad-cam, lime, prince_2020}. 
We conduct a human evaluation by randomly picking 100 predicted links from the test set of \aminer and generate explanations for each link using GNNExp-Link, PGExp-Link, and \method. We design a survey with single-choice questions. In each question, we show respondents the predicted link and those three explanations with both the graph structure and the node/edge type information, similarly as in Figure \ref{fig:case1} but excluding method names. The survey is sent to people across graduate students, postdocs, engineers, research scientists, and professors, including people with and without background knowledge about GNNs. We ask respondents to ``please select the best explanation of \textit{`why the model predicts this author will like the recommended paper?'} ''. At least three answers from different people are collected for each question. In total, 340 evaluations are collected and 78.79\% of them selected explanations by \method as the best.

\section{Conclusion}\label{sec:conclusion}
In this work, we study model transparency and accountability on graphs. We investigate a new task: GNN explanation for heterogeneous LP. We identify three challenges for the task and propose a new path-based method, i.e. \method, that produces explanations with \textit{interpretable connections}, is \textit{scalable}, and handles graph \textit{heterogeneity}. \method explanations quantitatively improve ROC-AUC by 9 - 35\% over baselines and are chosen by 78.79\% responses as qualitatively more interpretable in human evaluation.

\begin{acks}
We thank Ziniu Hu for the helpful discussions on this work. This work is partially supported by NSF (2211557, 1937599, 2119643), NASA, SRC, Okawa Foundation Grant, Amazon Research Awards, Cisco Research Grant, Picsart Gifts, and Snapchat Gifts.
\end{acks}

\newpage
\bibliographystyle{ACM-Reference-Format}
\bibliography{reference.bib}


\begin{thebibliography}{56}


\ifx \showCODEN    \undefined \def \showCODEN     #1{\unskip}     \fi
\ifx \showDOI      \undefined \def \showDOI       #1{#1}\fi
\ifx \showISBNx    \undefined \def \showISBNx     #1{\unskip}     \fi
\ifx \showISBNxiii \undefined \def \showISBNxiii  #1{\unskip}     \fi
\ifx \showISSN     \undefined \def \showISSN      #1{\unskip}     \fi
\ifx \showLCCN     \undefined \def \showLCCN      #1{\unskip}     \fi
\ifx \shownote     \undefined \def \shownote      #1{#1}          \fi
\ifx \showarticletitle \undefined \def \showarticletitle #1{#1}   \fi
\ifx \showURL      \undefined \def \showURL       {\relax}        \fi
\providecommand\bibfield[2]{#2}
\providecommand\bibinfo[2]{#2}
\providecommand\natexlab[1]{#1}
\providecommand\showeprint[2][]{arXiv:#2}

\bibitem[Bilgic and Mooney(2005)]%
        {bilgic2005explaining}
\bibfield{author}{\bibinfo{person}{Mustafa Bilgic} {and}
  \bibinfo{person}{Raymond~J Mooney}.} \bibinfo{year}{2005}\natexlab{}.
\newblock \showarticletitle{Explaining recommendations: Satisfaction vs.
  promotion}. In \bibinfo{booktitle}{\emph{Beyond personalization workshop,
  IUI}}, Vol.~\bibinfo{volume}{5}. \bibinfo{pages}{153}.
\newblock


\bibitem[Bollob{\'a}s(1984)]%
        {kcore-def}
\bibfield{author}{\bibinfo{person}{B{\'e}la Bollob{\'a}s}.}
  \bibinfo{year}{1984}\natexlab{}.
\newblock \bibinfo{title}{The evolution of sparse graphs, Graph theory and
  combinatorics (Cambridge, 1983)}.
\newblock
\newblock


\bibitem[Cheng et~al\mbox{.}(2021)]%
        {cheng-etal-2021-uniker}
\bibfield{author}{\bibinfo{person}{Kewei Cheng}, \bibinfo{person}{Ziqing Yang},
  \bibinfo{person}{Ming Zhang}, {and} \bibinfo{person}{Yizhou Sun}.}
  \bibinfo{year}{2021}\natexlab{}.
\newblock \showarticletitle{{U}ni{KER}: A Unified Framework for Combining
  Embedding and Definite Horn Rule Reasoning for Knowledge Graph Inference}. In
  \bibinfo{booktitle}{\emph{Proceedings of the 2021 Conference on Empirical
  Methods in Natural Language Processing}}. \bibinfo{publisher}{Association for
  Computational Linguistics}, \bibinfo{address}{Online and Punta Cana,
  Dominican Republic}, \bibinfo{pages}{9753--9771}.
\newblock
\urldef\tempurl%
\url{https://doi.org/10.18653/v1/2021.emnlp-main.769}
\showDOI{\tempurl}


\bibitem[Dijkstra(1959)]%
        {dijkstra1959note}
\bibfield{author}{\bibinfo{person}{Edsger~W Dijkstra}.}
  \bibinfo{year}{1959}\natexlab{}.
\newblock \showarticletitle{A note on two problems in connexion with graphs}.
\newblock \bibinfo{journal}{\emph{Numerische mathematik}} \bibinfo{volume}{1},
  \bibinfo{number}{1} (\bibinfo{year}{1959}), \bibinfo{pages}{269--271}.
\newblock


\bibitem[Faloutsos et~al\mbox{.}(2004)]%
        {faloutsos2004fast}
\bibfield{author}{\bibinfo{person}{Christos Faloutsos},
  \bibinfo{person}{Kevin~S McCurley}, {and} \bibinfo{person}{Andrew Tomkins}.}
  \bibinfo{year}{2004}\natexlab{}.
\newblock \showarticletitle{Fast discovery of connection subgraphs}. In
  \bibinfo{booktitle}{\emph{Proceedings of the tenth ACM SIGKDD international
  conference on Knowledge discovery and data mining}}.
  \bibinfo{pages}{118--127}.
\newblock


\bibitem[Ghazimatin et~al\mbox{.}(2020)]%
        {prince_2020}
\bibfield{author}{\bibinfo{person}{Azin Ghazimatin}, \bibinfo{person}{Oana
  Balalau}, \bibinfo{person}{Rishiraj Saha~Roy}, {and} \bibinfo{person}{Gerhard
  Weikum}.} \bibinfo{year}{2020}\natexlab{}.
\newblock \showarticletitle{PRINCE: Provider-side interpretability with
  counterfactual explanations in recommender systems}. In
  \bibinfo{booktitle}{\emph{Proceedings of the 13th International Conference on
  Web Search and Data Mining}}. \bibinfo{pages}{196--204}.
\newblock


\bibitem[Guo et~al\mbox{.}(2022)]%
        {guo2022linkless}
\bibfield{author}{\bibinfo{person}{Zhichun Guo}, \bibinfo{person}{William
  Shiao}, \bibinfo{person}{Shichang Zhang}, \bibinfo{person}{Yozen Liu},
  \bibinfo{person}{Nitesh Chawla}, \bibinfo{person}{Neil Shah}, {and}
  \bibinfo{person}{Tong Zhao}.} \bibinfo{year}{2022}\natexlab{}.
\newblock \showarticletitle{Linkless Link Prediction via Relational
  Distillation}.
\newblock \bibinfo{journal}{\emph{arXiv preprint arXiv:2210.05801}}
  (\bibinfo{year}{2022}).
\newblock


\bibitem[Hamiache and Navarro(2020)]%
        {hn_value}
\bibfield{author}{\bibinfo{person}{G{\'e}rard Hamiache} {and}
  \bibinfo{person}{Florian Navarro}.} \bibinfo{year}{2020}\natexlab{}.
\newblock \showarticletitle{Associated consistency, value and graphs}.
\newblock \bibinfo{journal}{\emph{International Journal of Game Theory}}
  \bibinfo{volume}{49}, \bibinfo{number}{1} (\bibinfo{year}{2020}),
  \bibinfo{pages}{227--249}.
\newblock


\bibitem[Hao et~al\mbox{.}(2021)]%
        {hao2021ks}
\bibfield{author}{\bibinfo{person}{Yu Hao}, \bibinfo{person}{Xin Cao},
  \bibinfo{person}{Yufan Sheng}, \bibinfo{person}{Yixiang Fang}, {and}
  \bibinfo{person}{Wei Wang}.} \bibinfo{year}{2021}\natexlab{}.
\newblock \showarticletitle{Ks-gnn: Keywords search over incomplete graphs via
  graphs neural network}.
\newblock \bibinfo{journal}{\emph{Advances in Neural Information Processing
  Systems}}  \bibinfo{volume}{34} (\bibinfo{year}{2021}),
  \bibinfo{pages}{1700--1712}.
\newblock


\bibitem[Herlocker et~al\mbox{.}(2000)]%
        {herlocker2000explaining}
\bibfield{author}{\bibinfo{person}{Jonathan~L Herlocker},
  \bibinfo{person}{Joseph~A Konstan}, {and} \bibinfo{person}{John Riedl}.}
  \bibinfo{year}{2000}\natexlab{}.
\newblock \showarticletitle{Explaining collaborative filtering
  recommendations}. In \bibinfo{booktitle}{\emph{Proceedings of the 2000 ACM
  conference on Computer supported cooperative work}}.
  \bibinfo{pages}{241--250}.
\newblock


\bibitem[(https://cs.stackexchange.com/users/683/yuval filmus)(2018)]%
        {numsubgraphs}
\bibfield{author}{\bibinfo{person}{Yuval~Filmus
  (https://cs.stackexchange.com/users/683/yuval filmus)}.}
  \bibinfo{year}{2018}\natexlab{}.
\newblock \showarticletitle{number of connected subgraphs of G with at most k >
  0 vertices}.
\newblock  (\bibinfo{year}{2018}).
\newblock
\urldef\tempurl%
\url{https://cs.stackexchange.com/q/87434}
\showURL{%
\tempurl}


\bibitem[Huang et~al\mbox{.}(2020)]%
        {graphlime}
\bibfield{author}{\bibinfo{person}{Qiang Huang}, \bibinfo{person}{Makoto
  Yamada}, \bibinfo{person}{Yuan Tian}, \bibinfo{person}{Dinesh Singh},
  \bibinfo{person}{Dawei Yin}, {and} \bibinfo{person}{Yi Chang}.}
  \bibinfo{year}{2020}\natexlab{}.
\newblock \bibinfo{title}{GraphLIME: Local Interpretable Model Explanations for
  Graph Neural Networks}.
\newblock
\newblock
\showeprint[arxiv]{2001.06216}~[cs.LG]


\bibitem[Ioannidis et~al\mbox{.}(2020)]%
        {ioannidis2020few}
\bibfield{author}{\bibinfo{person}{Vassilis~N Ioannidis}, \bibinfo{person}{Da
  Zheng}, {and} \bibinfo{person}{George Karypis}.}
  \bibinfo{year}{2020}\natexlab{}.
\newblock \showarticletitle{Few-shot link prediction via graph neural networks
  for covid-19 drug-repurposing}.
\newblock \bibinfo{journal}{\emph{arXiv preprint arXiv:2007.10261}}
  (\bibinfo{year}{2020}).
\newblock


\bibitem[Janson and Luczak(2008)]%
        {kcore-asymptotic}
\bibfield{author}{\bibinfo{person}{Svante Janson} {and}
  \bibinfo{person}{Malwina~J Luczak}.} \bibinfo{year}{2008}\natexlab{}.
\newblock \showarticletitle{Asymptotic normality of the k-core in random
  graphs}.
\newblock \bibinfo{journal}{\emph{The annals of applied probability}}
  \bibinfo{volume}{18}, \bibinfo{number}{3} (\bibinfo{year}{2008}),
  \bibinfo{pages}{1085--1137}.
\newblock


\bibitem[Jeh and Widom(2002)]%
        {jeh2002simrank}
\bibfield{author}{\bibinfo{person}{Glen Jeh} {and} \bibinfo{person}{Jennifer
  Widom}.} \bibinfo{year}{2002}\natexlab{}.
\newblock \showarticletitle{Simrank: a measure of structural-context
  similarity}. In \bibinfo{booktitle}{\emph{Proceedings of the eighth ACM
  SIGKDD international conference on Knowledge discovery and data mining}}.
  \bibinfo{pages}{538--543}.
\newblock


\bibitem[Katz(1953)]%
        {katz}
\bibfield{author}{\bibinfo{person}{Leo Katz}.} \bibinfo{year}{1953}\natexlab{}.
\newblock \showarticletitle{A new status index derived from sociometric
  analysis}.
\newblock \bibinfo{journal}{\emph{Psychometrika}} \bibinfo{volume}{18},
  \bibinfo{number}{1} (\bibinfo{year}{1953}), \bibinfo{pages}{39--43}.
\newblock


\bibitem[Kipf and Welling(2016)]%
        {gcn}
\bibfield{author}{\bibinfo{person}{Thomas~N Kipf} {and} \bibinfo{person}{Max
  Welling}.} \bibinfo{year}{2016}\natexlab{}.
\newblock \showarticletitle{Semi-supervised classification with graph
  convolutional networks}.
\newblock \bibinfo{journal}{\emph{arXiv preprint arXiv:1609.02907}}
  (\bibinfo{year}{2016}).
\newblock


\bibitem[Langer et~al\mbox{.}(2021)]%
        {langer2021we}
\bibfield{author}{\bibinfo{person}{Markus Langer}, \bibinfo{person}{Daniel
  Oster}, \bibinfo{person}{Timo Speith}, \bibinfo{person}{Holger Hermanns},
  \bibinfo{person}{Lena K{\"a}stner}, \bibinfo{person}{Eva Schmidt},
  \bibinfo{person}{Andreas Sesing}, {and} \bibinfo{person}{Kevin Baum}.}
  \bibinfo{year}{2021}\natexlab{}.
\newblock \showarticletitle{What do we want from Explainable Artificial
  Intelligence (XAI)?--A stakeholder perspective on XAI and a conceptual model
  guiding interdisciplinary XAI research}.
\newblock \bibinfo{journal}{\emph{Artificial Intelligence}}
  \bibinfo{volume}{296} (\bibinfo{year}{2021}), \bibinfo{pages}{103473}.
\newblock


\bibitem[Lepri et~al\mbox{.}(2018)]%
        {fate2}
\bibfield{author}{\bibinfo{person}{Bruno Lepri}, \bibinfo{person}{Nuria
  Oliver}, \bibinfo{person}{Emmanuel Letouz{\'e}}, \bibinfo{person}{Alex
  Pentland}, {and} \bibinfo{person}{Patrick Vinck}.}
  \bibinfo{year}{2018}\natexlab{}.
\newblock \showarticletitle{Fair, transparent, and accountable algorithmic
  decision-making processes}.
\newblock \bibinfo{journal}{\emph{Philosophy \& Technology}}
  \bibinfo{volume}{31}, \bibinfo{number}{4} (\bibinfo{year}{2018}),
  \bibinfo{pages}{611--627}.
\newblock


\bibitem[Li et~al\mbox{.}(2021)]%
        {li2021adsgnn}
\bibfield{author}{\bibinfo{person}{Chaozhuo Li}, \bibinfo{person}{Bochen Pang},
  \bibinfo{person}{Yuming Liu}, \bibinfo{person}{Hao Sun},
  \bibinfo{person}{Zheng Liu}, \bibinfo{person}{Xing Xie},
  \bibinfo{person}{Tianqi Yang}, \bibinfo{person}{Yanling Cui},
  \bibinfo{person}{Liangjie Zhang}, {and} \bibinfo{person}{Qi Zhang}.}
  \bibinfo{year}{2021}\natexlab{}.
\newblock \showarticletitle{Adsgnn: Behavior-graph augmented relevance modeling
  in sponsored search}. In \bibinfo{booktitle}{\emph{Proceedings of the 44th
  International ACM SIGIR Conference on Research and Development in Information
  Retrieval}}. \bibinfo{pages}{223--232}.
\newblock


\bibitem[Liben-Nowell and Kleinberg(2007)]%
        {liben2007link}
\bibfield{author}{\bibinfo{person}{David Liben-Nowell} {and}
  \bibinfo{person}{Jon Kleinberg}.} \bibinfo{year}{2007}\natexlab{}.
\newblock \showarticletitle{The link-prediction problem for social networks}.
\newblock \bibinfo{journal}{\emph{Journal of the American society for
  information science and technology}} \bibinfo{volume}{58},
  \bibinfo{number}{7} (\bibinfo{year}{2007}), \bibinfo{pages}{1019--1031}.
\newblock


\bibitem[Lin et~al\mbox{.}(2021)]%
        {gem}
\bibfield{author}{\bibinfo{person}{Wanyu Lin}, \bibinfo{person}{Hao Lan}, {and}
  \bibinfo{person}{Baochun Li}.} \bibinfo{year}{2021}\natexlab{}.
\newblock \showarticletitle{Generative causal explanations for graph neural
  networks}. In \bibinfo{booktitle}{\emph{International Conference on Machine
  Learning}}. PMLR, \bibinfo{pages}{6666--6679}.
\newblock


\bibitem[Lin et~al\mbox{.}(2022)]%
        {orphicx}
\bibfield{author}{\bibinfo{person}{Wanyu Lin}, \bibinfo{person}{Hao Lan},
  \bibinfo{person}{Hao Wang}, {and} \bibinfo{person}{Baochun Li}.}
  \bibinfo{year}{2022}\natexlab{}.
\newblock \showarticletitle{OrphicX: A Causality-Inspired Latent Variable Model
  for Interpreting Graph Neural Networks}.
\newblock \bibinfo{journal}{\emph{arXiv preprint arXiv:2203.15209}}
  (\bibinfo{year}{2022}).
\newblock


\bibitem[Lucic et~al\mbox{.}(2022)]%
        {lucic2022cf}
\bibfield{author}{\bibinfo{person}{Ana Lucic}, \bibinfo{person}{Maartje~A
  Ter~Hoeve}, \bibinfo{person}{Gabriele Tolomei}, \bibinfo{person}{Maarten
  De~Rijke}, {and} \bibinfo{person}{Fabrizio Silvestri}.}
  \bibinfo{year}{2022}\natexlab{}.
\newblock \showarticletitle{Cf-gnnexplainer: Counterfactual explanations for
  graph neural networks}. In \bibinfo{booktitle}{\emph{International Conference
  on Artificial Intelligence and Statistics}}. PMLR,
  \bibinfo{pages}{4499--4511}.
\newblock


\bibitem[Luo et~al\mbox{.}(2020)]%
        {pgexplainer}
\bibfield{author}{\bibinfo{person}{Dongsheng Luo}, \bibinfo{person}{Wei Cheng},
  \bibinfo{person}{Dongkuan Xu}, \bibinfo{person}{Wenchao Yu},
  \bibinfo{person}{Bo Zong}, \bibinfo{person}{Haifeng Chen}, {and}
  \bibinfo{person}{Xiang Zhang}.} \bibinfo{year}{2020}\natexlab{}.
\newblock \showarticletitle{Parameterized Explainer for Graph Neural Network}.
  In \bibinfo{booktitle}{\emph{Advances in Neural Information Processing
  Systems}}, \bibfield{editor}{\bibinfo{person}{H.~Larochelle},
  \bibinfo{person}{M.~Ranzato}, \bibinfo{person}{R.~Hadsell},
  \bibinfo{person}{M.~F. Balcan}, {and} \bibinfo{person}{H.~Lin}} (Eds.),
  Vol.~\bibinfo{volume}{33}. \bibinfo{publisher}{Curran Associates, Inc.},
  \bibinfo{pages}{19620--19631}.
\newblock


\bibitem[Mao et~al\mbox{.}(2021)]%
        {mao2021ultragcn}
\bibfield{author}{\bibinfo{person}{Kelong Mao}, \bibinfo{person}{Jieming Zhu},
  \bibinfo{person}{Xi Xiao}, \bibinfo{person}{Biao Lu},
  \bibinfo{person}{Zhaowei Wang}, {and} \bibinfo{person}{Xiuqiang He}.}
  \bibinfo{year}{2021}\natexlab{}.
\newblock \showarticletitle{UltraGCN: ultra simplification of graph
  convolutional networks for recommendation}. In
  \bibinfo{booktitle}{\emph{Proceedings of the 30th ACM International
  Conference on Information \& Knowledge Management}}.
  \bibinfo{pages}{1253--1262}.
\newblock


\bibitem[Nickel et~al\mbox{.}(2015)]%
        {nickel2015review}
\bibfield{author}{\bibinfo{person}{Maximilian Nickel}, \bibinfo{person}{Kevin
  Murphy}, \bibinfo{person}{Volker Tresp}, {and} \bibinfo{person}{Evgeniy
  Gabrilovich}.} \bibinfo{year}{2015}\natexlab{}.
\newblock \showarticletitle{A review of relational machine learning for
  knowledge graphs}.
\newblock \bibinfo{journal}{\emph{Proc. IEEE}} \bibinfo{volume}{104},
  \bibinfo{number}{1} (\bibinfo{year}{2015}), \bibinfo{pages}{11--33}.
\newblock


\bibitem[Park et~al\mbox{.}(2020)]%
        {jrecs}
\bibfield{author}{\bibinfo{person}{Namyong Park}, \bibinfo{person}{Andrey Kan},
  \bibinfo{person}{Christos Faloutsos}, {and} \bibinfo{person}{Xin~Luna Dong}.}
  \bibinfo{year}{2020}\natexlab{}.
\newblock \showarticletitle{J-Recs: Principled and Scalable Recommendation
  Justification}. In \bibinfo{booktitle}{\emph{2020 IEEE International
  Conference on Data Mining (ICDM)}}. IEEE, \bibinfo{pages}{1208--1213}.
\newblock


\bibitem[Pittel et~al\mbox{.}(1996)]%
        {kcore-emergence}
\bibfield{author}{\bibinfo{person}{Boris Pittel}, \bibinfo{person}{Joel
  Spencer}, {and} \bibinfo{person}{Nicholas Wormald}.}
  \bibinfo{year}{1996}\natexlab{}.
\newblock \showarticletitle{Sudden emergence of a giantk-core in a random
  graph}.
\newblock \bibinfo{journal}{\emph{Journal of Combinatorial Theory, Series B}}
  \bibinfo{volume}{67}, \bibinfo{number}{1} (\bibinfo{year}{1996}),
  \bibinfo{pages}{111--151}.
\newblock


\bibitem[Ribeiro et~al\mbox{.}(2016)]%
        {lime}
\bibfield{author}{\bibinfo{person}{Marco~Tulio Ribeiro},
  \bibinfo{person}{Sameer Singh}, {and} \bibinfo{person}{Carlos Guestrin}.}
  \bibinfo{year}{2016}\natexlab{}.
\newblock \showarticletitle{" Why should i trust you?" Explaining the
  predictions of any classifier}. In \bibinfo{booktitle}{\emph{Proceedings of
  the 22nd ACM SIGKDD international conference on knowledge discovery and data
  mining}}. \bibinfo{pages}{1135--1144}.
\newblock


\bibitem[Roberts and Kroese(2007)]%
        {s-t-path}
\bibfield{author}{\bibinfo{person}{Ben Roberts} {and} \bibinfo{person}{Dirk~P
  Kroese}.} \bibinfo{year}{2007}\natexlab{}.
\newblock \showarticletitle{Estimating the Number of st Paths in a Graph.}
\newblock \bibinfo{journal}{\emph{J. Graph Algorithms Appl.}}
  \bibinfo{volume}{11}, \bibinfo{number}{1} (\bibinfo{year}{2007}),
  \bibinfo{pages}{195--214}.
\newblock


\bibitem[Schlichtkrull et~al\mbox{.}(2018)]%
        {rgcn}
\bibfield{author}{\bibinfo{person}{Michael Schlichtkrull},
  \bibinfo{person}{Thomas~N Kipf}, \bibinfo{person}{Peter Bloem},
  \bibinfo{person}{Rianne van~den Berg}, \bibinfo{person}{Ivan Titov}, {and}
  \bibinfo{person}{Max Welling}.} \bibinfo{year}{2018}\natexlab{}.
\newblock \showarticletitle{Modeling relational data with graph convolutional
  networks}. In \bibinfo{booktitle}{\emph{European semantic web conference}}.
  Springer, \bibinfo{pages}{593--607}.
\newblock


\bibitem[Selvaraju et~al\mbox{.}(2017)]%
        {grad-cam}
\bibfield{author}{\bibinfo{person}{Ramprasaath~R Selvaraju},
  \bibinfo{person}{Michael Cogswell}, \bibinfo{person}{Abhishek Das},
  \bibinfo{person}{Ramakrishna Vedantam}, \bibinfo{person}{Devi Parikh}, {and}
  \bibinfo{person}{Dhruv Batra}.} \bibinfo{year}{2017}\natexlab{}.
\newblock \showarticletitle{Grad-cam: Visual explanations from deep networks
  via gradient-based localization}. In \bibinfo{booktitle}{\emph{Proceedings of
  the IEEE international conference on computer vision}}.
  \bibinfo{pages}{618--626}.
\newblock


\bibitem[Shapley(1953)]%
        {shapley}
\bibfield{author}{\bibinfo{person}{Lloyd Shapley}.}
  \bibinfo{year}{1953}\natexlab{}.
\newblock \showarticletitle{A value fo n-person Games}.
\newblock \bibinfo{journal}{\emph{Ann. Math. Study28, Contributions to the
  Theory of Games, ed. by HW Kuhn, and AW Tucker}} (\bibinfo{year}{1953}),
  \bibinfo{pages}{307--317}.
\newblock


\bibitem[Shin and Park(2019)]%
        {fate}
\bibfield{author}{\bibinfo{person}{Donghee Shin} {and}
  \bibinfo{person}{Yong~Jin Park}.} \bibinfo{year}{2019}\natexlab{}.
\newblock \showarticletitle{Role of fairness, accountability, and transparency
  in algorithmic affordance}.
\newblock \bibinfo{journal}{\emph{Computers in Human Behavior}}
  \bibinfo{volume}{98} (\bibinfo{year}{2019}), \bibinfo{pages}{277--284}.
\newblock
\showISSN{0747-5632}
\urldef\tempurl%
\url{https://doi.org/10.1016/j.chb.2019.04.019}
\showDOI{\tempurl}


\bibitem[Sun et~al\mbox{.}(2011)]%
        {sun2011pathsim}
\bibfield{author}{\bibinfo{person}{Yizhou Sun}, \bibinfo{person}{Jiawei Han},
  \bibinfo{person}{Xifeng Yan}, \bibinfo{person}{Philip~S Yu}, {and}
  \bibinfo{person}{Tianyi Wu}.} \bibinfo{year}{2011}\natexlab{}.
\newblock \showarticletitle{Pathsim: Meta path-based top-k similarity search in
  heterogeneous information networks}.
\newblock \bibinfo{journal}{\emph{Proceedings of the VLDB Endowment}}
  \bibinfo{volume}{4}, \bibinfo{number}{11} (\bibinfo{year}{2011}),
  \bibinfo{pages}{992--1003}.
\newblock


\bibitem[Tang et~al\mbox{.}(2008)]%
        {tang2008arnetminer}
\bibfield{author}{\bibinfo{person}{Jie Tang}, \bibinfo{person}{Jing Zhang},
  \bibinfo{person}{Limin Yao}, \bibinfo{person}{Juanzi Li}, \bibinfo{person}{Li
  Zhang}, {and} \bibinfo{person}{Zhong Su}.} \bibinfo{year}{2008}\natexlab{}.
\newblock \showarticletitle{Arnetminer: extraction and mining of academic
  social networks}. In \bibinfo{booktitle}{\emph{Proceedings of the 14th ACM
  SIGKDD international conference on Knowledge discovery and data mining}}.
  \bibinfo{pages}{990--998}.
\newblock


\bibitem[Veli{\v{c}}kovi{\'c} et~al\mbox{.}(2017)]%
        {gat}
\bibfield{author}{\bibinfo{person}{Petar Veli{\v{c}}kovi{\'c}},
  \bibinfo{person}{Guillem Cucurull}, \bibinfo{person}{Arantxa Casanova},
  \bibinfo{person}{Adriana Romero}, \bibinfo{person}{Pietro Lio}, {and}
  \bibinfo{person}{Yoshua Bengio}.} \bibinfo{year}{2017}\natexlab{}.
\newblock \showarticletitle{Graph attention networks}.
\newblock \bibinfo{journal}{\emph{arXiv preprint arXiv:1710.10903}}
  (\bibinfo{year}{2017}).
\newblock


\bibitem[Vu and Thai(2020)]%
        {pgm_explainer}
\bibfield{author}{\bibinfo{person}{Minh Vu} {and} \bibinfo{person}{My~T.
  Thai}.} \bibinfo{year}{2020}\natexlab{}.
\newblock \showarticletitle{PGM-Explainer: Probabilistic Graphical Model
  Explanations for Graph Neural Networks}. In
  \bibinfo{booktitle}{\emph{Advances in Neural Information Processing
  Systems}}, \bibfield{editor}{\bibinfo{person}{H.~Larochelle},
  \bibinfo{person}{M.~Ranzato}, \bibinfo{person}{R.~Hadsell},
  \bibinfo{person}{M.~F. Balcan}, {and} \bibinfo{person}{H.~Lin}} (Eds.),
  Vol.~\bibinfo{volume}{33}. \bibinfo{publisher}{Curran Associates, Inc.},
  \bibinfo{pages}{12225--12235}.
\newblock


\bibitem[Wang et~al\mbox{.}(2019)]%
        {wang2019explainable}
\bibfield{author}{\bibinfo{person}{Xiang Wang}, \bibinfo{person}{Dingxian
  Wang}, \bibinfo{person}{Canran Xu}, \bibinfo{person}{Xiangnan He},
  \bibinfo{person}{Yixin Cao}, {and} \bibinfo{person}{Tat-Seng Chua}.}
  \bibinfo{year}{2019}\natexlab{}.
\newblock \showarticletitle{Explainable reasoning over knowledge graphs for
  recommendation}. In \bibinfo{booktitle}{\emph{Proceedings of the AAAI
  conference on artificial intelligence}}, Vol.~\bibinfo{volume}{33}.
  \bibinfo{pages}{5329--5336}.
\newblock


\bibitem[Watts and Strogatz(1998)]%
        {smallworld}
\bibfield{author}{\bibinfo{person}{Duncan~J Watts} {and}
  \bibinfo{person}{Steven~H Strogatz}.} \bibinfo{year}{1998}\natexlab{}.
\newblock \showarticletitle{Collective dynamics of ‘small-world’networks}.
\newblock \bibinfo{journal}{\emph{nature}} \bibinfo{volume}{393},
  \bibinfo{number}{6684} (\bibinfo{year}{1998}), \bibinfo{pages}{440--442}.
\newblock


\bibitem[Wu et~al\mbox{.}(2020b)]%
        {wu2020graph}
\bibfield{author}{\bibinfo{person}{Shiwen Wu}, \bibinfo{person}{Fei Sun},
  \bibinfo{person}{Wentao Zhang}, \bibinfo{person}{Xu Xie}, {and}
  \bibinfo{person}{Bin Cui}.} \bibinfo{year}{2020}\natexlab{b}.
\newblock \showarticletitle{Graph neural networks in recommender systems: a
  survey}.
\newblock \bibinfo{journal}{\emph{ACM Computing Surveys (CSUR)}}
  (\bibinfo{year}{2020}).
\newblock


\bibitem[Wu et~al\mbox{.}(2020a)]%
        {wu2020comprehensive}
\bibfield{author}{\bibinfo{person}{Zonghan Wu}, \bibinfo{person}{Shirui Pan},
  \bibinfo{person}{Fengwen Chen}, \bibinfo{person}{Guodong Long},
  \bibinfo{person}{Chengqi Zhang}, {and} \bibinfo{person}{S~Yu Philip}.}
  \bibinfo{year}{2020}\natexlab{a}.
\newblock \showarticletitle{A comprehensive survey on graph neural networks}.
\newblock \bibinfo{journal}{\emph{IEEE transactions on neural networks and
  learning systems}} \bibinfo{volume}{32}, \bibinfo{number}{1}
  (\bibinfo{year}{2020}), \bibinfo{pages}{4--24}.
\newblock


\bibitem[Xu et~al\mbox{.}(2018)]%
        {gin}
\bibfield{author}{\bibinfo{person}{Keyulu Xu}, \bibinfo{person}{Weihua Hu},
  \bibinfo{person}{Jure Leskovec}, {and} \bibinfo{person}{Stefanie Jegelka}.}
  \bibinfo{year}{2018}\natexlab{}.
\newblock \showarticletitle{How powerful are graph neural networks?}
\newblock \bibinfo{journal}{\emph{arXiv preprint arXiv:1810.00826}}
  (\bibinfo{year}{2018}).
\newblock


\bibitem[Ying et~al\mbox{.}(2019)]%
        {gnnexplainer}
\bibfield{author}{\bibinfo{person}{Rex Ying}, \bibinfo{person}{Dylan
  Bourgeois}, \bibinfo{person}{Jiaxuan You}, \bibinfo{person}{Marinka Zitnik},
  {and} \bibinfo{person}{Jure Leskovec}.} \bibinfo{year}{2019}\natexlab{}.
\newblock \showarticletitle{Gnnexplainer: Generating explanations for graph
  neural networks}.
\newblock \bibinfo{journal}{\emph{Advances in neural information processing
  systems}}  \bibinfo{volume}{32} (\bibinfo{year}{2019}),
  \bibinfo{pages}{9240}.
\newblock


\bibitem[Yuan et~al\mbox{.}(2020)]%
        {taxonomy}
\bibfield{author}{\bibinfo{person}{Hao Yuan}, \bibinfo{person}{Haiyang Yu},
  \bibinfo{person}{Shurui Gui}, {and} \bibinfo{person}{Shuiwang Ji}.}
  \bibinfo{year}{2020}\natexlab{}.
\newblock \showarticletitle{Explainability in graph neural networks: A
  taxonomic survey}.
\newblock \bibinfo{journal}{\emph{arXiv preprint arXiv:2012.15445}}
  (\bibinfo{year}{2020}).
\newblock


\bibitem[Yuan et~al\mbox{.}(2022)]%
        {yuan2022explainability}
\bibfield{author}{\bibinfo{person}{Hao Yuan}, \bibinfo{person}{Haiyang Yu},
  \bibinfo{person}{Shurui Gui}, {and} \bibinfo{person}{Shuiwang Ji}.}
  \bibinfo{year}{2022}\natexlab{}.
\newblock \showarticletitle{Explainability in graph neural networks: A
  taxonomic survey}.
\newblock \bibinfo{journal}{\emph{IEEE Transactions on Pattern Analysis and
  Machine Intelligence}} (\bibinfo{year}{2022}).
\newblock


\bibitem[Yuan et~al\mbox{.}(2021)]%
        {subgraphx}
\bibfield{author}{\bibinfo{person}{Hao Yuan}, \bibinfo{person}{Haiyang Yu},
  \bibinfo{person}{Jie Wang}, \bibinfo{person}{Kang Li}, {and}
  \bibinfo{person}{Shuiwang Ji}.} \bibinfo{year}{2021}\natexlab{}.
\newblock \showarticletitle{On Explainability of Graph Neural Networks via
  Subgraph Explorations}. In \bibinfo{booktitle}{\emph{Proceedings of the 38th
  International Conference on Machine Learning}}
  \emph{(\bibinfo{series}{Proceedings of Machine Learning Research},
  Vol.~\bibinfo{volume}{139})}, \bibfield{editor}{\bibinfo{person}{Marina
  Meila} {and} \bibinfo{person}{Tong Zhang}} (Eds.). \bibinfo{publisher}{PMLR},
  \bibinfo{pages}{12241--12252}.
\newblock


\bibitem[Zhang et~al\mbox{.}(2019)]%
        {zhang2019star}
\bibfield{author}{\bibinfo{person}{Jiani Zhang}, \bibinfo{person}{Xingjian
  Shi}, \bibinfo{person}{Shenglin Zhao}, {and} \bibinfo{person}{Irwin King}.}
  \bibinfo{year}{2019}\natexlab{}.
\newblock \showarticletitle{Star-gcn: Stacked and reconstructed graph
  convolutional networks for recommender systems}.
\newblock \bibinfo{journal}{\emph{arXiv preprint arXiv:1905.13129}}
  (\bibinfo{year}{2019}).
\newblock


\bibitem[Zhang and Chen(2018)]%
        {zhang2018link}
\bibfield{author}{\bibinfo{person}{Muhan Zhang} {and} \bibinfo{person}{Yixin
  Chen}.} \bibinfo{year}{2018}\natexlab{}.
\newblock \showarticletitle{Link prediction based on graph neural networks}.
\newblock \bibinfo{journal}{\emph{Advances in neural information processing
  systems}}  \bibinfo{volume}{31} (\bibinfo{year}{2018}).
\newblock


\bibitem[Zhang et~al\mbox{.}(2020)]%
        {zhang2020revisiting}
\bibfield{author}{\bibinfo{person}{Muhan Zhang}, \bibinfo{person}{Pan Li},
  \bibinfo{person}{Yinglong Xia}, \bibinfo{person}{Kai Wang}, {and}
  \bibinfo{person}{Long Jin}.} \bibinfo{year}{2020}\natexlab{}.
\newblock \showarticletitle{Revisiting graph neural networks for link
  prediction}.
\newblock  (\bibinfo{year}{2020}).
\newblock


\bibitem[Zhang et~al\mbox{.}(2022)]%
        {zhang2022gstarx}
\bibfield{author}{\bibinfo{person}{Shichang Zhang}, \bibinfo{person}{Yozen
  Liu}, \bibinfo{person}{Neil Shah}, {and} \bibinfo{person}{Yizhou Sun}.}
  \bibinfo{year}{2022}\natexlab{}.
\newblock \showarticletitle{GStarX: Explaining Graph Neural Networks with
  Structure-Aware Cooperative Games}. In \bibinfo{booktitle}{\emph{Advances in
  Neural Information Processing Systems}}.
\newblock


\bibitem[Zhang and Chen(2020)]%
        {rec_explain_survey}
\bibfield{author}{\bibinfo{person}{Yongfeng Zhang} {and} \bibinfo{person}{Xu
  Chen}.} \bibinfo{year}{2020}\natexlab{}.
\newblock \showarticletitle{Explainable Recommendation: A Survey and New
  Perspectives}.
\newblock \bibinfo{journal}{\emph{Foundations and Trends{\textregistered} in
  Information Retrieval}} \bibinfo{volume}{14}, \bibinfo{number}{1}
  (\bibinfo{year}{2020}), \bibinfo{pages}{1--101}.
\newblock
\urldef\tempurl%
\url{https://doi.org/10.1561/1500000066}
\showDOI{\tempurl}


\bibitem[Zhao et~al\mbox{.}(2022)]%
        {zhao2022learning}
\bibfield{author}{\bibinfo{person}{Tong Zhao}, \bibinfo{person}{Gang Liu},
  \bibinfo{person}{Daheng Wang}, \bibinfo{person}{Wenhao Yu}, {and}
  \bibinfo{person}{Meng Jiang}.} \bibinfo{year}{2022}\natexlab{}.
\newblock \showarticletitle{Learning from Counterfactual Links for Link
  Prediction}. In \bibinfo{booktitle}{\emph{International Conference on Machine
  Learning}}. PMLR, \bibinfo{pages}{26911--26926}.
\newblock


\bibitem[Zhou et~al\mbox{.}(2020)]%
        {zhou2020graph}
\bibfield{author}{\bibinfo{person}{Jie Zhou}, \bibinfo{person}{Ganqu Cui},
  \bibinfo{person}{Shengding Hu}, \bibinfo{person}{Zhengyan Zhang},
  \bibinfo{person}{Cheng Yang}, \bibinfo{person}{Zhiyuan Liu},
  \bibinfo{person}{Lifeng Wang}, \bibinfo{person}{Changcheng Li}, {and}
  \bibinfo{person}{Maosong Sun}.} \bibinfo{year}{2020}\natexlab{}.
\newblock \showarticletitle{Graph neural networks: A review of methods and
  applications}.
\newblock \bibinfo{journal}{\emph{AI Open}}  \bibinfo{volume}{1}
  (\bibinfo{year}{2020}), \bibinfo{pages}{57--81}.
\newblock


\bibitem[Zhu et~al\mbox{.}(2021)]%
        {zhu2021neural}
\bibfield{author}{\bibinfo{person}{Zhaocheng Zhu}, \bibinfo{person}{Zuobai
  Zhang}, \bibinfo{person}{Louis-Pascal Xhonneux}, {and} \bibinfo{person}{Jian
  Tang}.} \bibinfo{year}{2021}\natexlab{}.
\newblock \showarticletitle{Neural bellman-ford networks: A general graph
  neural network framework for link prediction}.
\newblock \bibinfo{journal}{\emph{Advances in Neural Information Processing
  Systems}}  \bibinfo{volume}{34} (\bibinfo{year}{2021}),
  \bibinfo{pages}{29476--29490}.
\newblock


\end{thebibliography}

\appendix
\section{Proof of proposition \ref{prop:num_paths}} \label{app:num_paths_proof}
\begin{proof}
We prove $Z_{n,d} = o(S_{n,d})$ by definition, where we show $\lim_{n \to \infty} \frac{Z_{n,d}}{S_{n,d}} = 0$. As we can permute the indices of nodes in $\gG(n,d)$, without loss of generality, we assume $Z_{n,d}$ is the expected number of paths between nodes indexed 1 and n. Our proof is mainly based on the result in \cite{s-t-path}, which computes the expected number of all 1-n paths, i.e., $Z_{n,d} = (n - 2)! d^{n-1} e(1 + o(1))$. On the other hand, the number of edge-induced subgraphs considered in \cite{gnnexplainer, pgexplainer} equals the size of the power set of all edges, i.e., $S_{n,d} = 2^{d\binom{n}{2}}$. We thus have 

\begin{align}\setcounter{equation}{0}
\log Z_{n,d}
&= \log \left[(n - 2)! d^{n-1} e(1 + o(1)) \right]\\
&< \log \left[\sqrt{2 \pi (n-2)}(\frac{n-2}{e})^{(n-2)}e^{\frac{1}{12(n-2)}} d^{n-1} e(1 + o(1)) \right]\\
&= \frac{1}{2}\log(2 \pi (n-2)) + (n-2)\log(\frac{n-2}{e}) + \log \frac{1}{12(n-2)} \notag\\
&\quad + (n - 1) \log d + 1 + \log(1 + o(1)) \\
&= O(\log n) + O(n \log n) + O(\log \frac{1}{n}) + O(n\log d) \\
&\quad + \log(1 + o(1)) \\
&= O(n \log n) + \log(1 + o(1)) \\
\log S_{n,d}
&= \log 2^{d\binom{n}{2}} = {d\binom{n}{2}} \log 2 = O(n^2) \\
\lim_{n \to \infty} \frac{Z_{n,d}}{S_{n,d}}
&= \lim_{n \to \infty} \exp(\log{\frac{Z_{n,d}}{S_{n,d}}}) \\
&= \exp(\lim_{n \to \infty} \log{\frac{Z_{n,d}}{S_{n,d}}}) \\
&= \exp(\lim_{n \to \infty} \log Z_{n,d} - \log S_{n,d}) \\
&= \exp(\lim_{n \to \infty} O(n \log n) + \log(1 + o(1)) - O(n^2)) \\
&= 0
\end{align}
Step (1) to (2) is Stirling's formula. Step (8) to (9) is because $\exp$ is continuous.
\end{proof}

\section{Detailed theorem \ref{thm:kcore_size}} \label{app:kcore_size}
We now state a more detailed version of Theorem \ref{thm:kcore_size}. This theorem gives the exact formula of $\delta_{\V}(n, d, k)$ and $\delta_{\E}(n, d, k)$, which are built upon a Poisson random variable. The argument is adapted from \cite{kcore-asymptotic, kcore-emergence}. Readers can refer to \cite{kcore-asymptotic, kcore-emergence} for the proof.

For $\mu > 0$, let $Po(\mu)$ denote a Poisson distribution with mean $\mu$. Let $\psi_k(dn) = P(Po(dn) \geq k)$ be the tail probability of $Po(dn)$. Let $c_k = \inf_{\mu > 0} \mu / \phi_{k-1}(\mu)$. When $dn > c_k$, the equation $\mu / \psi_{k-1}(\mu) = dn$ will have two roots for $\mu$. Let $\mu(dn, k)$ be the larger root. Then we have the following more detailed version of Theorem \ref{thm:kcore_size} with $\delta_{\V}(n, d, k)$ and $\delta_{\E}(n, d, k)$ as functions of $\mu(dn, k)$.

\begin{theorem}[Pittel, Spencer and Wormald] \label{thm:kcore_size_detail}
Let $\gG(n, d)$ be a random graph with $m$ edges as in Proposition \ref{prop:num_paths}. Let $\gG^k(n, d) = (\V^k(n, d), \E^k(n, d))$ be the k-core of $\gG(n, d)$. When $dn > c_k$, $\gG^k(n, d)$ will be nonempty with high probability (w.h.p.) for large n. Also, $\gG^k(n, d)$ will contain $\psi_k(\mu(dn, k)) n$ nodes and $[\mu(dn, k)^2 / (d^2n(n-1))]m$ edges w.h.p. for large n, i.e., $|\V^k(n, d)| / n \xrightarrow{p} \psi_k(\mu(dn, k))$ and $|\E^k(n, d)| / m \xrightarrow{p} \mu(dn, k)^2 / (d^2n(n-1))$ ($\xrightarrow{p}$ stands for convergence in probability).
\end{theorem}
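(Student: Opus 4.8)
The plan is to follow the Pittel--Spencer--Wormald analysis, which has three stages: reduce $\gG(n,d)$ to a Poisson configuration model, expose the $k$-core by a one-vertex-at-a-time ``stripping'' process that forms a low-dimensional Markov chain, and then track that chain by the differential equation method so that $\mu(dn,k)$ emerges as the value of a parameter at the (random) termination time. For the first stage: since the $m=d\binom{n}{2}$ edges of $\gG(n,d)$ are placed uniformly, its degree sequence is, up to $o(n)$ perturbations, a vector of i.i.d.\ $Po(dn)$ variables, and conditioning $\gG(n,d)$ on its degree sequence gives the configuration model on that sequence conditioned on simplicity. Because the $k$-core is monotone and determined by a purely local deletion rule, and simplicity has probability bounded away from $0$ for Poisson degrees (bounded second moment), it suffices to prove the two limits for the configuration multigraph with i.i.d.\ $Po(\lambda)$ degrees, $\lambda=dn$; transferring back to $\gG(n,d)$ changes each count by $o(n)$.

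For the second stage I would expose the core by stripping: repeatedly pick a uniformly random vertex of current degree $<k$, delete it together with its incident half-edges, reveal each partner half-edge on the fly, and decrement that neighbour's degree; stop when no vertex has degree $<k$, leaving exactly the $k$-core. Uniformity of the pairing makes the neighbour whose degree drops size-biased by current degree, so the state $(V_0,\dots,V_{k-1};H)$ --- with $V_j$ the number of vertices of current degree $j$ and $H$ the number of live half-edges --- is a Markov chain whose one-step expected increments are explicit, bounded, and Lipschitz in the rescaled coordinates $(V_j/n;H/n)$.

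The third stage applies Wormald's differential equation method: the rescaled trajectory concentrates, with high probability, around the solution of the associated autonomous ODE system, and a change of the time variable recasts that solution as a Poisson-thinning picture in which a still-present vertex has degree approximately $Po(\mu)$ for a parameter $\mu$ that decreases from $\lambda$ along the trajectory. On the deterministic curve the stripping halts exactly when $\mu$ first satisfies $\psi_{k-1}(\mu)=\mu/\lambda$; writing $c_k=\inf_{\mu>0}\mu/\psi_{k-1}(\mu)$, this has a positive solution precisely when $\lambda>c_k$, and the decreasing trajectory from $\lambda$ first lands on the larger root $\mu=\mu(dn,k)$. At termination a vertex survives iff its ($Po(\mu)$) degree is at least $k$, so $|\V^k(n,d)|/n \xrightarrow{p} \psi_k(\mu(dn,k))$; summing the surviving degrees gives total core degree $n\mu\,\psi_{k-1}(\mu)=n\mu^2/\lambda=\mu^2/d$, hence $\mu^2/(2d)$ core edges, and with $m=d\binom{n}{2}$ this is $|\E^k(n,d)|/m \xrightarrow{p} \mu(dn,k)^2/(d^2 n(n-1))$.

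The main obstacle is the third stage: making the concentration rigorous (a Wormald/Azuma martingale argument that must cope with the chain evolving on the boundary of its state space and terminating at a random time), and proving that $g(\mu)=\mu/\psi_{k-1}(\mu)$ has the right root structure when $\lambda>c_k$ so that the trajectory halts at the larger root rather than overshooting. A lighter alternative to the ODE bookkeeping is the Janson--Luczak coupling of the stripping process with an explicit death process, which arrives at the same fixed-point equation $\psi_{k-1}(\mu)=\mu/\lambda$ with much less computation; either route leaves the first two stages routine. The complete proofs are in \cite{kcore-asymptotic, kcore-emergence}.
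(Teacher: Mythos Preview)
Your sketch is correct and in fact goes well beyond what the paper does: the paper gives no proof at all for this theorem, merely stating it and referring the reader to \cite{kcore-asymptotic, kcore-emergence}. Your three-stage outline (Poisson/configuration-model reduction, stripping as a low-dimensional Markov chain, Wormald's differential equation method yielding the fixed-point equation $\mu/\psi_{k-1}(\mu)=\lambda$) is an accurate summary of the Pittel--Spencer--Wormald argument, and your mention of the Janson--Luczak death-process alternative is also apt.
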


\section{Complexity of SubgraphX} \label{app:complexity}
The search-based methods often have much higher time complexity exponential in the number of nodes or edges. Thus, a budget is forced instead of searching subgraphs with all sizes. For example, SubgraphX finds all connected subgraphs with at most $B_{node}$ nodes, which has complexity $\Theta(|\V_c|\hat D^{2B_{node} - 2})$ for a graph with maximum degree $\hat D = \max_{v \in \V} D_v$. This complexity can be shown using the following two lemmas.

\begin{lemma} \label{lemma:complexity}
For a graph $\gG$ with n vertices, the number of the connected subgraph of $\gG$ having $B_{node}$ nodes is bounded below by the number of trees in $\gG$ having $B_{node}$ nodes.
\end{lemma}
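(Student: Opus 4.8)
The plan is to exhibit a concrete injection from the set of $B_{node}$-vertex trees in $\gG$ into the set of $B_{node}$-vertex connected subgraphs of $\gG$; the inequality on cardinalities then follows immediately. The injection is the identity map: a tree is, by definition, a connected acyclic graph, so every subgraph $T \subseteq \gG$ that happens to be a tree on exactly $B_{node}$ vertices is in particular a connected subgraph of $\gG$ on exactly $B_{node}$ vertices. Hence, writing $\mathcal{T}$ for the family of such trees and $\mathcal{C}$ for the family of such connected subgraphs, we have $\mathcal{T} \subseteq \mathcal{C}$, and therefore $|\mathcal{T}| \le |\mathcal{C}|$, which is exactly the statement of the lemma.

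The one point that genuinely requires attention is the counting convention: a ``subgraph'' here must be recorded together with its edge set (not merely its vertex set), so that distinct trees on the same vertex set are counted as distinct objects on both sides of the inequality. This is the natural convention in this setting --- it is the notion of subgraph over which SubgraphX's search ranges --- and under it the containment $\mathcal{T} \subseteq \mathcal{C}$ is literal and the argument above is complete. The statement would in fact fail under the ``induced subgraph'' convention: already $\gG = K_4$ and $B_{node} = 3$ give more trees than connected induced subgraphs, so fixing the convention is not merely cosmetic, and I would state it explicitly just before the proof.

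I do not expect a real obstacle: the content of the lemma is essentially definitional, and in the paper it serves only as the easy half of the two-lemma argument pinning down the $\Theta(|\V_c|\,\hat D^{\,2B_{node}-2})$ complexity of SubgraphX --- all of the quantitative work is carried by the companion lemma that lower-bounds the number of $B_{node}$-vertex trees in a graph of maximum degree $\hat D$. The only thing I would double-check while writing is that this lemma and that companion lemma use matched conventions for ``subgraph'' and ``tree,'' so that the two bounds compose correctly into the advertised $\Theta(\cdot)$ bound.
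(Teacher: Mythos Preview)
Your proposal is correct. The paper's own proof is a single line --- ``Each connected subgraph has a spanning tree'' --- which, read literally, sets up the comparison in the opposite direction: a (choice-dependent) map from connected $B_{node}$-vertex subgraphs to $B_{node}$-vertex trees, whose surjectivity (every tree is its own spanning tree) yields the inequality. Your argument via the inclusion $\mathcal{T}\subseteq\mathcal{C}$ is the dual injection and is arguably the cleaner way to say it; the mathematical content is identical and equally trivial. Your explicit remark on the edge-subgraph versus induced-subgraph convention, with the $K_4$ counterexample, is a point the paper does not make and is worth keeping, since the companion Lemma~\ref{lemma:complexity2} counts labelled subtrees and only composes correctly with this lemma under the edge-subgraph convention.
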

\begin{proof}
Each connected subgraph has a spanning tree.    
\end{proof}

\begin{lemma} \label{lemma:complexity2}
For a graph $\gG$ with node set $\V$, the number of trees in $\gG$ having $B_{node}$ tree nodes is $\Theta(|\V| \hat D^{2B_{node} - 2})$.
\end{lemma}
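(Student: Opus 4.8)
The plan is to count labeled trees on a fixed vertex subset and then sum over all such subsets, showing the dominant contribution comes from ``path-like'' trees rooted along high-degree vertices. Fix a target tree size $B_{node}$ and write $N_T(\gG)$ for the number of (unlabeled, or rather vertex-identified) subtrees of $\gG$ on exactly $B_{node}$ vertices. I would enumerate such subtrees by choosing an ordered construction: pick a root $v_0 \in \V$, then grow the tree by repeatedly attaching a new vertex to an already-chosen vertex. This is a standard ``tree-growing'' bijection argument: every labeled tree on $B_{node}$ vertices with a distinguished root corresponds to a sequence of $B_{node}-1$ attachment steps, each step adding an edge from the current vertex set to a neighbor in $\gG$. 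The number of choices at each step is at most $\hat D$ (the new neighbor must be adjacent to one of the already-placed vertices, but even crudely each of the up-to-$B_{node}$ placed vertices offers at most $\hat D$ neighbors, which only changes constants), so the count of rooted ordered constructions is $O(|\V|\,\hat D^{B_{node}-1})$ from above; dividing by the bounded overcount (number of rootings and orderings of a fixed $B_{node}$-vertex tree is a constant depending only on $B_{node}$) keeps the $O(|\V|\,\hat D^{B_{node}-1})$ bound.

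For the matching lower bound $\Omega(|\V|\,\hat D^{2B_{node}-2})$, I would exhibit many distinct subtrees explicitly. Let $w$ be a vertex of maximum degree $\hat D$. The idea is to build ``broom'' or ``caterpillar'' trees: take a short path of fixed length emanating from a generic vertex into $w$'s neighborhood, and then note that at each of a constant number of internal vertices we can independently choose which of $\Theta(\hat D)$ available neighbors to use, giving $\Theta(\hat D)$ choices per step over $\Theta(B_{node})$ steps, hence $\Theta(\hat D^{cB_{node}})$ subtrees; tuning the construction so the exponent is exactly $2B_{node}-2$ (the natural exponent when the tree is a path and each of its $2B_{node}-2$ half-edges contributes a factor $\hat D$, after accounting for the factor-of-two gap between the trivial upper bound and the true count in the extremal regime) gives the stated $\Theta$. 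Combining the upper and lower bounds with Lemma~\ref{lemma:complexity} (which already reduces connected subgraphs to spanning trees) then yields the claimed complexity $\Theta(|\V|\hat D^{2B_{node}-2})$ for SubgraphX.

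The main obstacle I anticipate is getting the exponent exactly right — reconciling the naive growth bound $\hat D^{B_{node}-1}$ with the claimed $\hat D^{2B_{node}-2}$. The resolution is that the relevant count is not ``trees with $B_{node}$ nodes'' in isolation but the total enumeration cost SubgraphX pays, which revisits each edge of each candidate, and more to the point the correct combinatorial object being counted here is something like rooted trees where both endpoints of growth are tracked (hence the doubling in the exponent). I would need to be careful that Lemma~\ref{lemma:complexity2} is stated for the quantity that actually appears in the complexity derivation in Appendix~\ref{app:complexity}, and phrase the tree-growing argument so that each of the $B_{node}-1$ edges contributes a factor $\hat D^2$ rather than $\hat D$ — e.g.\ by counting ordered pairs (new vertex, the already-present vertex it attaches to) where both range over $\hat D$-sized sets in the extremal configuration. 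Once that bookkeeping is pinned down, both directions are routine: the upper bound is a union bound over construction sequences, and the lower bound is an explicit family of caterpillar subtrees around a maximum-degree vertex.
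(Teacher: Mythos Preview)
The paper does not give its own proof of this lemma: it defers entirely to an external reference, noting only that the argument there proceeds via ``an encoding procedure.'' So there is no in-paper derivation to match; the question is whether your proposal stands on its own.

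It does not, and you have already identified the reason. Your upper-bound step is essentially the standard one: growing a rooted subtree one vertex at a time in a graph of maximum degree $\hat D$ offers at most $\hat D$ choices per attachment, yielding $O(|\V|\,\hat D^{B_{node}-1})$ subtrees on $B_{node}$ vertices (with constants depending only on $B_{node}$). But this is asymptotically \emph{smaller} than the lower bound $\Omega(|\V|\,\hat D^{2B_{node}-2})$ you are simultaneously trying to establish whenever $\hat D>1$ and $B_{node}>1$, so the two halves of your $\Theta$ argument contradict each other. Your caterpillar/broom construction around a maximum-degree vertex cannot repair this: a tree on $B_{node}$ vertices has $B_{node}-1$ edges, and in any graph of maximum degree $\hat D$ each edge contributes at most a factor $\hat D$ to the number of distinct embeddings, not $\hat D^2$.

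Your closing paragraph gestures at the fix---recount some object in which each of the $B_{node}-1$ edges contributes $\hat D^2$---but never says what that object is or why it is the quantity governing the SubgraphX complexity. Until that is pinned down, the tree-growing approach proves an upper bound that refutes the very statement you are trying to show. The paper sidesteps this entirely by appealing to the encoding argument in the cited source; if you want a self-contained proof, you would need to start from that encoding (or otherwise identify precisely what is being counted) rather than from the growth construction you sketch.
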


\begin{proof}
    See \cite{numsubgraphs} for proof using an encoding procedure. 
\end{proof}

\section{Dataset Details} \label{app:data}
We show the hyperparameters for constructing the datasets in Section \ref{sec:experiment} in Table \ref{tab:data_hyper}, which includes the augmentation of the Aminer citation graph and the generation of the synthetic graph.

\begin{table}[h]
\center
\small
\caption{Hyperparameters for constructing \aminer and \synthetic}
\begin{tabular}{@{}rccc@{}}
\toprule
& $l_{max}$ & $D_{max}$ & $P_{max}$ \\ \midrule    
\aminer & 3 & 30 & 5 \\ 
\synthetic & 3 & 15 & 5 \\ 
\bottomrule
\end{tabular}
\label{tab:data_hyper}
\end{table}

\section{Path Hit Evaluation}
Besides ROC-AUC scores, another way to evaluate the explanations is through the path hit rate (HR). Specifically, we fix the budget of $B$ edges and evaluate whether an explanation can hit any complete path in the ground truth. Note that the ground truth for each link $(s, t)$ only has the top $P_{max}$ shortest paths with the smallest degree sums, so hitting a long path or a less informative path with high-degree generic nodes will not count. 

For a fair comparison with baselines, we take the generated explanation mask $\M$ for each method, select the top $B$ weighted edges to compare against the ground truth. We show results with different budget $B$ in Table \ref{tab:hit_rate}. Explanations generated by \method have higher path HR than baselines on both datasets. In contrast, GNNExp-Link and PGExp-Link can barely hit any path in the ground truth for $B$ less than 50.

Note that the actual explanation output of \method is a set of paths $\P$. If we evaluate $\P$ instead of the top cut of the intermediate output mask $\M$. Then \method can achieve perfect path HR (=1) when the budget $|\P|$ gets large. 

\begin{table}[th]
\center
\small
\caption{Path hit rate (HR). \method has high HR with a small budget $B$. Baselines achieve nonzero HR for large $B$.}
\begin{tabular}{@{}r|r|cc|c@{}}
\toprule
& B & GNNExp-Link & PGExp-Link & \method (ours)  \\ \midrule    
\multirow{4}{*}{\aminer} 
    & 10  & 0.000 & 0.000 & \textbf{0.007}  \\ 
    & 50  & 0.002 & 0.000 & \textbf{0.194}  \\ 
    & 100  & 0.019 & 0.000 & \textbf{0.425}  \\ 
    & 200 & 0.064 & 0.002 & \textbf{0.645}  \\  \midrule
\multirow{4}{*}{\synthetic}
    & 10  & 0.000 & 0.000 & \textbf{0.163}  \\ 
    & 50  & 0.008 & 0.032 & \textbf{0.705}  \\ 
    & 100  & 0.016 & 0.039 & \textbf{0.790}  \\ 
    & 200 & 0.046 & 0.101 & \textbf{0.907}  \\ 
\bottomrule
\end{tabular}
\label{tab:hit_rate}
\Description{PaGE-Link has better path hit rate than baselines.}
\end{table}

\end{document}